%% file: paper.tex
\documentclass{article} 
\PassOptionsToPackage{numbers,sort&compress}{natbib}
\usepackage{iclr2027_conference,times}
\setcitestyle{numbers,square,citesep={,},aysep={,}}

\input{math_commands.tex}

\usepackage[utf8]{inputenc}
\usepackage[T1]{fontenc}
\usepackage{hyperref}
\usepackage{url}
\usepackage{booktabs}
\usepackage{amsfonts}
\usepackage{amssymb}
\usepackage{amsmath}
\usepackage{amsthm}
\usepackage{mathtools}
\usepackage{microtype}
\usepackage{xcolor}
\usepackage{graphicx}
\usepackage{subcaption}
\usepackage{enumitem}

\usepackage{amsmath,amsthm,amssymb}
\usepackage{algorithm,algorithmic}
\allowdisplaybreaks
\newtheorem{proposition}{Proposition}

\newcommand{\diag}{\mathrm{diag}}

\newcommand{\defeq}{:=}

\newcommand{\vQ}{\mathbf{Q}}
\newcommand{\vK}{\mathbf{K}}
\newcommand{\vV}{\mathbf{V}}

\newcommand{\vdQ}{\mathbf{dQ}}
\newcommand{\vdK}{\mathbf{dK}}
\newcommand{\vdV}{\mathbf{dV}}

\newcommand{\vS}{\mathbf{S}}
\newcommand{\vdS}{\mathbf{dS}}

\newcommand{\vP}{\mathbf{P}}
\newcommand{\vdP}{\mathbf{dP}}

\newcommand{\vX}{\mathbf{X}}
\newcommand{\vdX}{\mathbf{dX}}
\newcommand{\vO}{\mathbf{O}}
\newcommand{\vdO}{\mathbf{dO}}

\newcommand{\vm}{\mathbf{m}}
\newcommand{\vell}{\mathbf{\ell}}

\newcommand{\vL}{\mathbf{L}}

\newcommand{\fullName}{MassAlloc Attention}
\newcommand{\shortName}{MALA}

\title{\fullName{}:\\ Let Attention Allocate Its Own Compute}

\author{
  \parbox{\textwidth}{
    Jingze Shi$^{12}$\thanks{Equal contribution. $^1$The Hong Kong University of Science and Technology (Guangzhou), Guangzhou, China. $^2$Beijing Academy of Artificial Intelligence, Beijing, China. $^3$Université Paris Cité, Paris, France. Corresponding author: Guang Liu <liuguang@baai.ac.cn>, Yuyu Luo <yuyuluo@hkust-gz.edu.cn>.} \ \ Zhangyang Peng$^{1}$ \ \ Xianduo Li$^{2}$ \ \ Yanlin Qi$^{23}$ \ \ Xiaotian Lin$^{1}$ \ \ Haoxian Chen$^{12}$  \\[2pt]
    Liangdong Wang$^{2}$ \ \ Guang Liu$^{2}$ \ \ Yuyu Luo$^{1}$
  }
}

\iclrfinalcopy 
\begin{document}

\maketitle
\fancyhead{}
\lhead{Let Attention Allocate Its Own Compute}

\begin{abstract}
Long-context full softmax attention (FullAttn) often assigns negligible normalized mass to much of the causal score space, yet dense kernels execute the complete post-score path after forming each QK tile. We introduce \textbf{\fullName{} (\shortName{})}, a fused attention primitive that preserves score access to every legal causal interaction and uses normalized contribution to allocate post-score computation. Forward uses its evolving online-softmax normalizer, while backward reuses the finalized normalizer to derive nested retained support using only standard attention state. A common tolerance governs training and inference, allowing the retained work to adapt across queries, heads, layers, and inputs. \shortName{} reduces low-contribution post-score computation while retaining quadratic QK score discovery. A matched-work study at 8K isolates the benefit of distribution-adaptive allocation: under exactly matched total post-score work, \shortName{} approaches a per-instance reference-mass oracle, with mean omitted mass of 0.0188\% versus 0.0182\%, while static allocations perform substantially worse. Across context lengths from 1K to 32K tokens, the same tolerance maintains low output and gradient errors relative to the reference operator. Across a broader controlled associative-recall comparison, \shortName{} closely tracks FullAttn as context grows, reaching 89.67\% accuracy at 8K compared with 89.97\% for FullAttn. In an attention-operator benchmark at 128K tokens on 8 GPUs with tensor parallelism, \shortName{} reduces forward and backward latency during training by $2.2\times$ and $3.0\times$ and decoding latency during inference by $1.6\times$ relative to FullAttn, while retaining FullAttn-level per-rank peak operator memory. Across scaling-law training from 0.6B to 14B parameters on 128 GPUs, \shortName{} closely tracks FullAttn in perplexity while reducing total training FLOPs, with a 23.1\% reduction at 14B during 32K-context training. The resulting 14B models and 32B models from separate continued training achieve comparable knowledge, reasoning, and long-context retrieval scores to FullAttn. These results indicate that allocating post-score computation according to normalized attention contributions can retain the evaluated capabilities of FullAttn while reducing attention computation.
Our code is open-sourced at \href{https://github.com/HKUSTDial/flash-sparse-attention}{\nolinkurl{flash-sparse-attention}}.
\end{abstract}

\input{sections/introduction}

\input{sections/methodology}

\input{sections/experiments}

\input{sections/related_work}

\input{sections/conclusion}



\bibliography{biblio}
\bibliographystyle{iclr2027_conference}

\appendix
\input{sections/appendix}

\end{document}

%% file: math_commands.tex
\usepackage{amsmath,amsfonts,bm}

\def\eqref#1{equation~\ref{#1}}

\def\1{\bm{1}}

\def\vo{{\bm{o}}}

\def\vv{{\bm{v}}}

\DeclareMathAlphabet{\mathsfit}{\encodingdefault}{\sfdefault}{m}{sl}
\SetMathAlphabet{\mathsfit}{bold}{\encodingdefault}{\sfdefault}{bx}{n}



%% file: sections/introduction.tex
\section{Introduction}
\label{sec:introduction}

Context lengths are expanding from thousands to hundreds of thousands of tokens~\citep{snell2024tts} to support long-document understanding~\citep{park2023generative,gemini2025}, multi-turn reasoning~\citep{hf2025openr1,deepseekai2025deepseekr1incentivizingreasoningcapability,qwen32025}, and repository-level code generation~\citep{zhang2024codeagent}.
Attention over these long contexts incurs substantial computation and memory traffic.
FlashAttention~\citep{dao2022flashattention,shah2024flashattention3} improves the IO efficiency of self-attention~\citep{vaswani2017attention} through tiling, fusion, and online softmax~\citep{milakov2018onlinesoftmax}.
However, these IO improvements leave the dense execution pattern unchanged: every legal causal tile still incurs post-score computation after its $\vQ\vK$ scores are formed, including softmax updates, $\vV$ loading, and $\vP\vV$ accumulation in forward, and probability reconstruction and the computation of $\vdP$, $\vdS$, $\vdQ$, $\vdK$, and $\vdV$ in backward.

Full softmax attention (FullAttn) is highly non-uniform: its mass often concentrates in local regions, sink tokens, and a sparse set of retrieval-relevant interactions~\citep{gu2024whenas,barbero2025whydl,queipodellano2025attentionsa,xiao2024duoattentionel}.
Many remaining interactions receive negligible normalized mass~\citep{gao2024seerattention,yuan2025blasstdynamicblockedattention}, yet dense kernels still execute the full forward and backward post-score computation for low-contribution tiles.
Attention already produces scores and softmax statistics that can guide these decisions.
These observations motivate preserving score access to the complete causal context while using normalized contribution to decide which tiles warrant further computation, allowing attention to allocate its own post-score compute.

Allocating this work also requires an execution structure that remains efficient during training and inference.
Static windows and block patterns prescribe a position-defined support~\citep{child2019generating,beltagy2020longformerlongdocumenttransformer,zaheer2020big,fu2025slidingwindowattentiontraining}, while dynamic methods select tokens or blocks through content-dependent scores or routing~\citep{tang2024quest,lai2025flexprefill,li2024snapkv,zhang2023h2o,xiao2024infllm,qi2026pariskv,zhao2025infllmv2,yuan2025nativesparseattentionhardwarealigned,lu2025moba,gao2024seerattention}.
Adaptive top-$p$ methods vary the selected support with an estimated mass target~\citep{lin2025twilightadaptiveattentionsparsity,ni2026doublephierarchicaltoppsparse}.
These approaches can avoid QK computation for excluded interactions; a complementary opportunity is to allocate post-score work inside the attention loop after forming each exact QK tile.
This raises the question: can allocating post-score computation from attention's own normalized contributions preserve operator fidelity, model quality, and long-range retrieval while reducing the cost of training and inference?

\begin{figure}[!t]
    \centering
    \includegraphics[width=\textwidth]{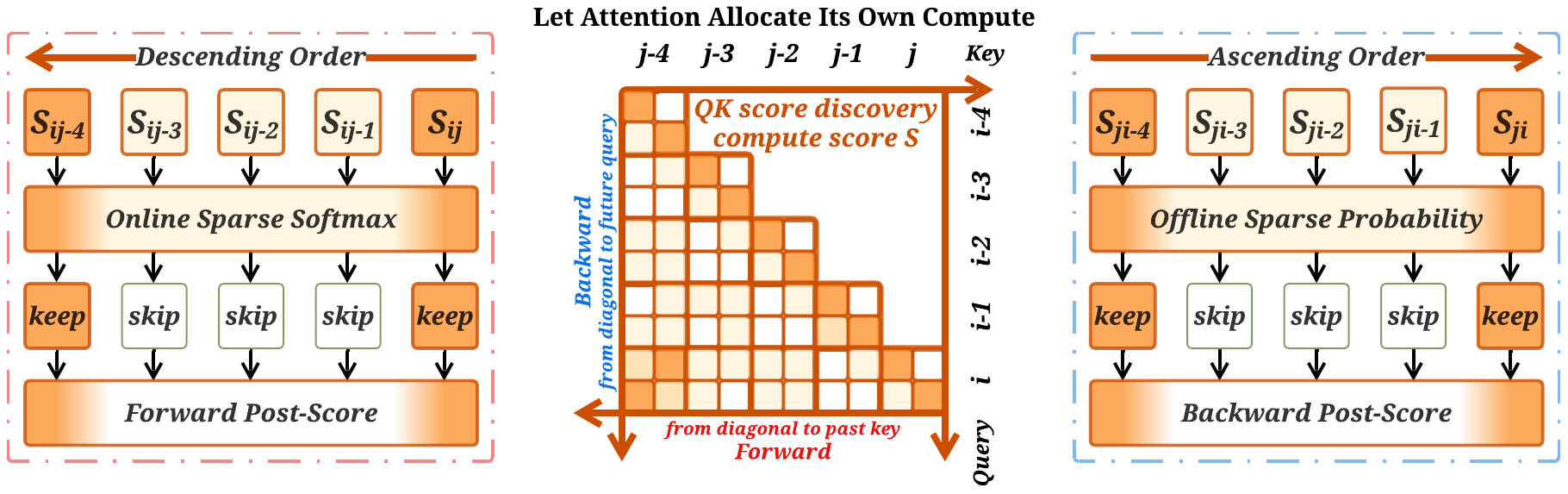}
    \caption{
    \textbf{\shortName{} lets attention allocate its own post-score compute.}
    Every legal causal tile undergoes QK score discovery. Forward uses its evolving softmax normalizer to allocate forward post-score execution; backward reuses the finalized normalizer to derive nested retained support while storing only standard attention state.
    }
    \label{fig:teaser}
\end{figure}

We introduce \textbf{\fullName{} (\shortName{})}, a fused attention primitive that converts normalized contribution into runtime compute allocation, as illustrated in Figure~\ref{fig:teaser}.
Every legal causal tile first undergoes QK score discovery.
During forward, \shortName{} bounds normalized contributions using its evolving online-softmax normalizer; during backward, it uses the finalized normalizer saved by forward to evaluate each recomputed tile.
In either pass, tiles whose contributions fall below a length-normalized tolerance omit their post-score work.
The tolerance controls admissible contribution, while the realized attention distribution determines the retained computation across queries, heads, layers, and inputs.

The same allocation rule also supports fused execution for training and inference.
Forward bounds the total dense probability mass assigned to omitted positions, while backward derives retained support nested within the forward support using only the saved normalizer.
The operators use standard attention state without materializing a selection mask or retained-tile indices.
A common tolerance governs the training forward and backward passes, inference prefill, and autoregressive decoding, while the realized work remains specific to each execution context.

\shortName{} retains quadratic QK score discovery; its savings arise from avoiding low-contribution post-score execution.
\shortName{} develops this execution direction into a normalized-mass allocation rule with paired forward and backward semantics.
The forward rule provides an omitted-mass guarantee, while output and gradient fidelity are evaluated empirically.

We evaluate \shortName{} around two questions: whether normalized-mass allocation preserves operator fidelity, language-model quality, and long-range retrieval, and whether the resulting reduction in post-score work translates into practical training and inference efficiency under tensor parallelism~\citep{megatron-lm}.
A matched-work allocation study first isolates the benefit of distribution-adaptive allocation: under exactly matched total post-score work, \shortName{} approaches the per-instance reference-mass oracle, with mean omitted mass of 0.0188\% versus 0.0182\%, while static position- and layer-head-based allocations perform substantially worse.
With the same tolerance from 1K to 32K tokens, \shortName{} maintains low relative output and gradient errors against the reference operator.
Across a broader controlled associative-recall comparison, fixed-budget sparse baselines use a 1,024-slot ceiling while \shortName{} adaptively allocates post-score work at a similar scale; at 8K, it reaches 89.67\% accuracy, compared with 89.97\% for FullAttn.
In the attention-operator benchmark at 128K sequence length on 8 H100 GPUs with tensor parallelism, \shortName{} reduces forward and backward latency during training by $2.2\times$ and $3.0\times$, and decoding latency during inference by $1.6\times$ relative to FullAttn, while retaining FullAttn-level peak operator memory.
Across scaling-law training from 0.6B to 14B parameters, \shortName{} closely tracks FullAttn in perplexity; at 14B, it reduces total training FLOPs by 2.5\% during 4K pre-training and 23.1\% during 32K long-context training.
At the model level, the resulting 14B models and separately continued-trained 32B models retain aggregate knowledge, reasoning, and long-context retrieval performance comparable to FullAttn.
Our contributions are as follows:
\begin{itemize}[leftmargin=1em]
    \item We formulate sparse attention as \textbf{distribution-conditioned compute allocation}: every causal interaction remains score-accessible, while the probability distribution realized by attention determines its post-score work.
    \item We implement \textbf{\fullName{}} for training and inference with paired online forward and offline backward rules and a common normalized-mass tolerance. The operators use only native attention state, with a forward omitted-mass bound and nested backward support.
    \item We evaluate the modeling and efficiency consequences of normalized-mass allocation through matched-work controls, operator fidelity, controlled associative recall, kernel execution, scaling-law training, and model-level evaluation.
\end{itemize}

%% file: sections/methodology.tex
\section{Methodology}
\label{sec:methodology}

We formulate attention as a runtime compute allocator by separating QK score discovery from post-score execution. We first define the allocation rule and its contribution tolerance, then present the online forward and offline backward tests, characterize their computational cost, and describe fused execution for training and inference.

\subsection{Attention as a Compute Allocator}
\label{sec:fsa_allocator}

\shortName{} computes QK scores for every legal causal tile and decides whether to execute the subsequent forward or backward computation from those scores and the available softmax state.

For notational simplicity, we describe a single query head and its associated key/value head, suppressing batch and head indices. Let $N_q$ and $N_k$ be the query and key sequence lengths, with $\vQ\in\mathbb{R}^{N_q\times d_h}$ and $\vK,\vV\in\mathbb{R}^{N_k\times d_h}$. We absorb the standard softmax scale $1/\sqrt{d_h}$ into $\vQ$ and use natural exponentials and logarithms throughout. Queries are divided into blocks $I_i$ of $B_q$ rows and keys and values into blocks $J_j$ of $B_k$ rows. Lowercase $q$ and $k$ index individual query and key positions, while $i$ and $j$ index blocks.

For each tile, score discovery forms $\vS_{ij}=\vQ_i\vK_j^\top$. Forward post-score computation comprises the softmax update, $\vV$ loading, and $\vP\vV$ accumulation; backward post-score computation comprises probability reconstruction and the computation of $\vdP$, $\vdS$, $\vdQ$, $\vdK$, and $\vdV$. Here, $\vP$ denotes attention probabilities, and $\vdX$ denotes the backpropagated quantity associated with $\vX$. FullAttn executes both stages for every legal tile, whereas \shortName{} allocates post-score work after score discovery.

To define a common contribution scale, let $\mathcal{C}_q$ be the set of causally visible keys for query row $q$ and let $L_q=|\mathcal{C}_q|$. Uniform attention assigns probability $1/L_q$ to each such key. We scale this reference by a shared dimensionless tolerance $\tau>0$, giving the length-aware threshold $\tau/L_q$. \shortName{} compares the largest per-key contribution ratio in each tile with this threshold.

With zero-based token positions, $L_q=q+1$ in equal-length causal training. In single-token autoregressive decoding, $L_0=N_k$, including the current token. For block $I_i$, $\vL_i\in\mathbb{R}^{B_q}$ collects these per-row counts over valid query rows. The normalization by $L_q$ gives the tolerance the same interpretation across query positions and sequence lengths without separately calibrated thresholds. The tolerance specifies admissible contribution rather than a compute budget: sharp attention rows can reject many tiles, whereas diffuse rows generally retain more.

\subsection{Online and Offline Allocation}
\label{sec:fsa_rule}

\textbf{Online Sparse Softmax.}
The forward pass~(Algorithm~\ref{alg:fsa_forward}) maintains the standard online-softmax row maximum $\vm_i$ and shifted normalization sum $\vell_i$ over retained tiles. For query row $q$, let $R_q^{(t)}$ contain the causal keys retained before the current tile is tested, and let $s_{qk}$ be the QK score for key $k$. The retained normalizer at this point is
\begin{equation}
    Z_q^{(t)}
    \defeq \sum_{k\in R_q^{(t)}}\exp(s_{qk})
    =\exp\bigl(m_q^{(t)}\bigr)\ell_q^{(t)}
    \label{eq:fsa_running_normalizer}
\end{equation}
Here, $m_q^{(t)}$ and $\ell_q^{(t)}$ are the entries of the running block states for row $q$. For a nonempty retained set, $m_q^{(t)}+\log\ell_q^{(t)}=\log Z_q^{(t)}$. After forming a candidate tile, its largest contribution ratio for this row is
\begin{equation}
    r_{qj}^{(t)}
    \defeq
    \frac{\exp\bigl(\max_{k\in J_j\cap\mathcal{C}_q}s_{qk}\bigr)}{Z_q^{(t)}}
    \label{eq:fsa_contribution_ratio}
\end{equation}
The denominator includes only the mass retained so far, so this ratio can exceed one and upper-bounds the largest dense-softmax probability in the candidate tile for that row.

The tile is skipped only when $r_{qj}^{(t)}<\tau/L_q$ for every valid query row with a legal entry in the tile. Taking logarithms and collecting the rows of $I_i$ gives the blockwise test used by the kernel:
\begin{equation}
    \operatorname{rowmax}(\vS_{ij})
    -\vm_i-\log\vell_i
    < \log(\tau/\vL_i)
    \label{eq:fsa_online_test}
\end{equation}
The inequality is interpreted elementwise over relevant rows. Causally masked entries have score $-\infty$, and rows with no legal key in the candidate tile do not constrain its retention. The test is conservative at tile granularity: one row that fails the skip condition retains the tile for all rows in the query block.

Before any key has been retained for a row, $\ell_q^{(t)}=0$ and $Z_q^{(t)}=0$. For a candidate tile containing a legal key for that row, the log-ratio is defined as $+\infty$, so its first legal tile is necessarily retained and every forward attention row is nonempty. For causal attention, \shortName{} visits tiles from the diagonal toward earlier keys. This traversal initializes the running state from valid recent context and typically establishes a strong normalizer early. Locality is an execution prior rather than a support assumption: all earlier causal tiles still undergo score discovery, and any distant tile with a sufficiently large score is retained.

A skipped tile leaves $\vm_i$, $\vell_i$, and $\vO_i$ unchanged; a retained tile follows the standard online-softmax update. The final output is therefore ordinary softmax attention renormalized over the retained tiles. Let $R_q$ denote the final retained set and $Z_{R,q}=\sum_{k\in R_q}\exp(s_{qk})$ its unnormalized mass. At the end of forward, the kernel overwrites the running sum buffer with the final log-normalizer, $\vell_i\leftarrow\vm_i+\log\vell_i$, whose entries are $z_q=\log Z_{R,q}$. This saved state is passed to backward. Appendix~\ref{sec:appendix:fsa_mass_bound} provides the forward approximation analysis.

\textbf{Offline Sparse Probability.}
The backward kernel~(Algorithm~\ref{alg:fsa_backward}) uses the finalized normalizer instead of replaying the forward sequence of partial normalizers. Here, \emph{offline} means that the normalizer is already finalized when a tile is tested; allocation still occurs at runtime and requires no offline calibration.

Backward applies the same contribution tolerance using $Z_{R,q}$ in place of the evolving $Z_q^{(t)}$. In the key-major traversal, the recomputed score tile is $\vS_{ji}=\vK_j\vQ_i^\top=\vS_{ij}^\top\in\mathbb{R}^{B_k\times B_q}$. Each query's saved log-normalizer and threshold are therefore broadcast along the key dimension. The resulting skip condition is
\begin{equation}
    \vS_{ji}-\vell_i^\top
    < \log(\tau/\vL_i)^\top
    \label{eq:fsa_offline_test}
\end{equation}
Here the inequality is interpreted elementwise over valid causal entries and must hold throughout the tile. Otherwise, \shortName{} reconstructs $\vP_{ji}=\exp(\vS_{ji}-\vell_i^\top)$ and executes the standard $\vdV$, $\vdP$, softmax-backward, $\vdQ$, and $\vdK$ path. Unlike forward, the backward rule does not require the diagonal tile to be retained.

In exact arithmetic, $Z_{R,q}\ge Z_q^{(t)}$ for every online test with positive retained mass, so using the finalized denominator cannot increase any key's contribution ratio. With the same tile partition and causal mask in both passes, every forward-skipped tile therefore also satisfies the backward skip condition. Backward may additionally omit a tile that forward retained before its normalizer was complete. Its retained support is thus nested within the forward support without a stored forward mask.

This nesting does not imply exact differentiation of the forward operator, since backward can omit additional gradient contributions. Their error also depends on $\vdO$ and $\vV$, so we evaluate gradient fidelity directly at the operator level.

\subsection{Attention Cost}
\label{sec:fsa_cost}

For fixed head configuration, \shortName{} remains quadratic in sequence length because it computes QK scores for every legal causal tile. Its savings are data-dependent constant-factor reductions in post-score arithmetic and memory traffic: a forward skip saves full-tile exponentiation, $\vV$ loading, and $\vP\vV$ accumulation; a backward skip saves probability reconstruction and the matrix multiplications and elementwise operations used to form $\vdV$, $\vdP$, $\vdS$, $\vdQ$, and $\vdK$. The tests require a maximum reduction and scalar comparisons using existing attention state. Support-sparse methods can additionally avoid QK computation for excluded regions, whereas \shortName{} retains full score accessibility. As $\tau\rightarrow0$, the skip conditions vanish and both passes recover FullAttn.

We measure allocated work as the number of key slots per query for which the post-score path is executed. Retaining a $B_q\times B_k$ tile contributes $B_k$ post-score key slots to each of its $B_q$ query rows. We report the average as \emph{mean post-score key slots per query}, excluding QK score discovery. This quantity captures realized work across training and inference, including the effect of tile sharing across rows. When attention architectures use different value dimensions, we separately use post-score FLOP-equivalent work for cross-architecture matching; Appendix~\ref{sec:appendix:work_accounting} defines this accounting.

\subsection{Fused Execution}
\label{sec:fsa_execution}

The allocation tests are embedded in the ordinary tiled attention loop. The running maximum, shifted normalization sum, and output accumulator remain on chip during forward. Forward writes only the output $\vO$ and the final log-normalizer $\vell$, while backward reuses $\vell$ to reconstruct retained probabilities. The fused operators materialize neither the attention matrix nor a binary mask, retained-tile indices, or router state. The same forward operator serves training and inference prefill; Appendix~\ref{sec:appendix:fused_execution} details the traversal and state handling for each pass.

The same probability tolerance governs training forward and backward, inference prefill, and decoding. Under split-KV decoding, each split tests tiles against its own partial normalizer, while $L_q$ in $\tau/L_q$ still counts the query's full causal context. A smaller available normalizer makes the test more conservative and may retain additional post-score work, so a shared tolerance does not require split and unsplit execution to retain identical supports. Partial outputs are combined through the standard normalizer-aware reduction. \shortName{} neither compresses nor evicts the persistent KV cache, so its decoding-memory scope is the attention operator's working set rather than cache capacity.

%% file: sections/experiments.tex
\section{Experiments}
\label{sec:experiments}

\textbf{Evaluation Overview.}
We organize the experiments around a sequence of increasingly comprehensive questions: whether the realized attention distribution improves compute allocation under fixed post-score work, whether one tolerance preserves the reference operator across context lengths, whether the resulting allocation supports arbitrary long-range associations and efficient operator execution, and whether its quality-compute trade-off persists across model scales. We conclude with knowledge, reasoning, and long-context evaluations of the resulting 14B models and of 32B models trained in a separate continued-training study.

\textbf{Experimental Settings.}
Unless stated otherwise, \shortName{} uses the same canonical tolerance, $\tau=1$, across forward, backward, prefill, and decoding; the realized distribution sets the retained work for each layer, head, input, and sequence length. Within each experiment, attention variants use matched model scale, depth, hidden size, data, and optimization settings while retaining their native attention configurations. The matched-work and operator-fidelity studies use the same 14B \shortName{} checkpoint obtained after the 32K long-context stage of the scaling-law study. Scaling-law and 32B continued training use 128 NVIDIA H100 GPUs; model-level evaluation and operator benchmarking use 8 H100 GPUs. Appendix~\ref{sec:appendix:experiment_setup} provides the work and cost accounting, baseline configurations, training schedules, hardware settings, and evaluation protocols.

\textbf{Normalized-Mass Allocation at Matched Work.}
We isolate the value of allocating work from the realized attention distribution while holding total work fixed. At the evaluated 8K context length, all policies in Table~\ref{tab:matched_work} execute exactly the same total post-score work; their shared average of approximately 1,024 post-score key slots per query is induced by \shortName{} under $\tau=1$. We compare \shortName{}'s online decisions with three diagnostic reference-mass controls: a position-only allocation, a static layer-head-position allocation, and a per-instance allocation given the work realized by \shortName{} for each decision. All three controls rank candidate regions using finalized per-instance reference mass; they differ only in how much work they assign to each decision. Static layer-head specialization improves over position-only allocation by capturing fixed specialization patterns. Compared with static layer-head-position allocation, per-instance allocation reduces mean omitted mass by $9.5\times$ and mean relative output $L_2$ error by $17.2\times$. Using only its evolving online state, \shortName{} closely approaches this reference-mass oracle: mean omitted mass is 0.0188\% versus 0.0182\%, and mean relative output $L_2$ error is 0.0174\% versus 0.0164\%. At fixed post-score work, the reference-mass controls quantify the benefit of per-instance allocation, and \shortName{}'s online decisions recover nearly all of that benefit. Appendix~\ref{sec:appendix:matched_work_setup} details the control construction and exact work matching.

\begin{table}[!t]
    \centering
    \small
    \caption{
    \textbf{Matched-work allocation.}
    Every policy executes exactly the same total post-score work, with a shared average of approximately 1,024 slots per query. The first three rows use finalized per-instance reference mass for region selection and differ only in how work is allocated across decisions. Omitted probability mass and relative output $L_2$ error are reported as percentages; statistics pool input-layer-head-query-block allocation decisions.
    }
    \vspace{-1em}
    \resizebox{\linewidth}{!}{
    \begin{tabular}{@{}lcccccc@{}}
    \toprule
    \sc{Allocation Scope} & \sc{Selection Signal} & \sc{Mean Slots} & \multicolumn{2}{c}{\sc{Omitted Mass (\%)}} & \multicolumn{2}{c}{\sc{Output Error (\%)}} \\
    \cmidrule(lr){4-5}\cmidrule(lr){6-7}
    & & & \sc{Mean $\downarrow$} & \sc{P95 $\downarrow$} & \sc{Mean $\downarrow$} & \sc{P95 $\downarrow$} \\
    \midrule
    Position & Final reference mass & $\approx$1,024 & $0.6012$ & $0.9923$ & $0.6231$ & $4.237\phantom {0}$ \\
    Layer/Head/Position & Final reference mass & $\approx$1,024 & $0.1721$ & $0.6247$ & $0.2817$ & $0.9237$ \\
    Input/Layer/Head/Position & Final reference mass & $\approx$1,024 & $\mathbf{0.0182}$ & $\mathbf{0.0664}$ & $\mathbf{0.0164}$ & $\mathbf{0.1013}$ \\
    \textbf{Input/Layer/Head/Position} & \textbf{\shortName{} online mass bound} & $\approx$1,024 & $\underline{0.0188}$ & $\underline{0.0693}$ & $\underline{0.0174}$ & $\underline{0.1017}$ \\
    \bottomrule
    \end{tabular}
    }
    \vspace{-1.0em}
    \label{tab:matched_work}
\end{table}

\begin{figure}[!t]
    \centering
    \includegraphics[width=\textwidth]{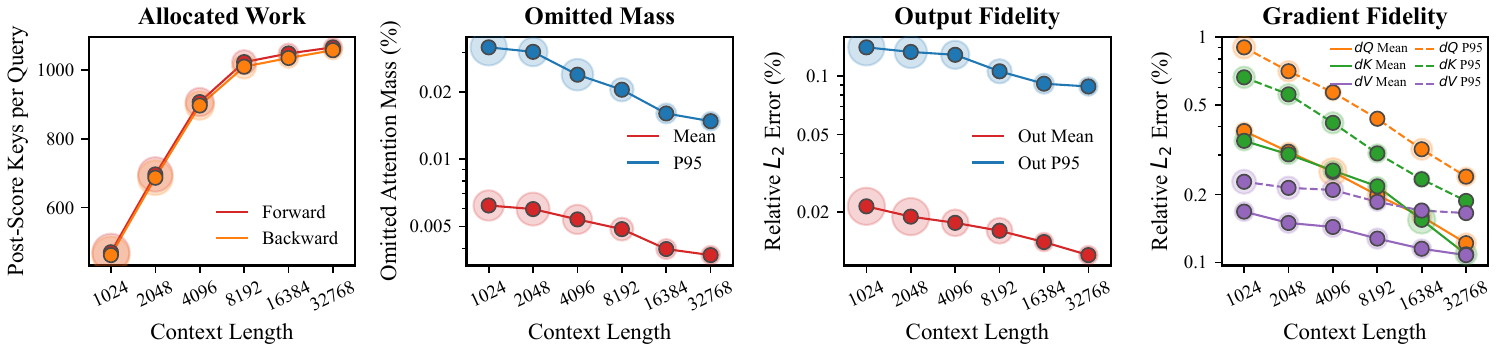}
    \vspace{-1.0em}
    \caption{
        \textbf{Operator fidelity.}
        Forward and Backward allocated work is reported as mean post-score key slots per query. Actual omitted reference probability mass, relative output error, and relative gradient errors are measured over every query position against the same operator with screening disabled. Query positions are first aggregated within each sequence-layer-head state; reported means and P95 values pool these states, and marker halos reflect confidence intervals from sequence bootstrap.
    }
    \label{fig:operator_fidelity}
    \vspace{-1.0em}
\end{figure}

\textbf{Operator Fidelity Across Context Lengths.}
We next test whether a single normalized-mass tolerance preserves the behavior of the reference attention operator as its context grows. For prefixes from 1K to 32K tokens, we run the standard screened operator and a screening-disabled reference on identical model states over the complete causal support. We measure realized post-score work, omitted reference probability mass, and relative $L_2$ errors in the output and in $\vdQ$, $\vdK$, and $\vdV$. Figure~\ref{fig:operator_fidelity} shows that, as context grows by $32\times$, mean forward work increases from 470 to 1,066 key slots per query and mean backward work from 462 to 1,058; beyond 8K, both remain between 1,010 and 1,066. This stable regime establishes 1,024 as the rounded empirical reference scale used below, induced by the canonical tolerance. Within this operator-fidelity suite and across all lengths, mean omitted probability mass is at most 0.0062\% and its P95 is at most 0.032\%; mean relative output $L_2$ error is at most 0.021\% and its P95 at most 0.14\%. Mean relative gradient errors are at most 0.38\% for $\vdQ$, 0.35\% for $\vdK$, and 0.17\% for $\vdV$; the corresponding worst P95 errors are 0.90\%, 0.66\%, and 0.23\%. The stable work and low errors support the same tolerance across forward and backward execution. Appendix~\ref{sec:appendix:operator_fidelity_setup} provides the data, sampling, gradient-probe, and aggregation protocols.

\textbf{Controlled Associative Recall.}
We use controlled associative recall~\citep{arora2024zoology} to test whether distribution-adaptive allocation preserves arbitrary long-range interactions before language priors and model-scale effects enter the evaluation. Guided by the approximately 1K scale observed above, we use raw post-score slots to control retained interactions: fixed-budget methods use a 1,024-slot ceiling after causal clipping, including DSA despite its larger value dimension, while \shortName{} continues to use the canonical tolerance and realizes approximately the same work scale through normalized-mass allocation. We vary sequence length from 1,024 to 8,192 and $d_{model}$ from 64 to 512. Figure~\ref{fig:ar} shows that \shortName{} closely tracks FullAttn as the context grows. At sequence length 8,192 and $d_{model}=512$, \shortName{} reaches 89.67\% accuracy and FullAttn reaches 89.97\%, whereas DSA and MoBA reach 52.61\% and 47.25\%, respectively, and NSA reaches 22.61\%. Thus attention-directed allocation preserves associations that fixed-support and externally selected alternatives miss under a common raw-slot reference. Appendix~\ref{sec:appendix:associative_recall_setup} provides the data construction, training protocol, and work-matching details.

\begin{figure}[!t]
    \centering
    \includegraphics[width=\textwidth]{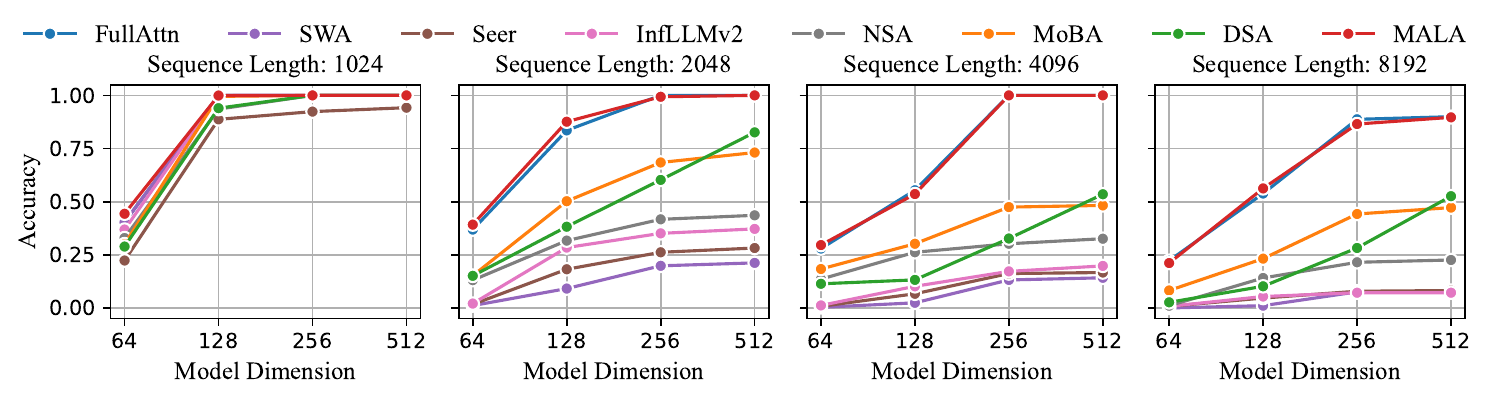}
    \vspace{-2.0em}
    \caption{
      \textbf{Associative Recall}.
      Accuracy with 256 key-value pairs across sequence lengths and model dimensions. Fixed-budget sparse methods use a common 1,024-slot ceiling, with realized raw slots determined after causal clipping; \shortName{} obtains its work from the canonical tolerance. The task directly tests the 256 bindings, with no injected distractors. \shortName{} closely tracks FullAttn as the context grows, while the other sparse mechanisms lose a substantial fraction of the associations.
    }
    \vspace{-1.0em}
    \label{fig:ar}
\end{figure}

\begin{figure}[!t]
    \centering
    \includegraphics[width=\textwidth]{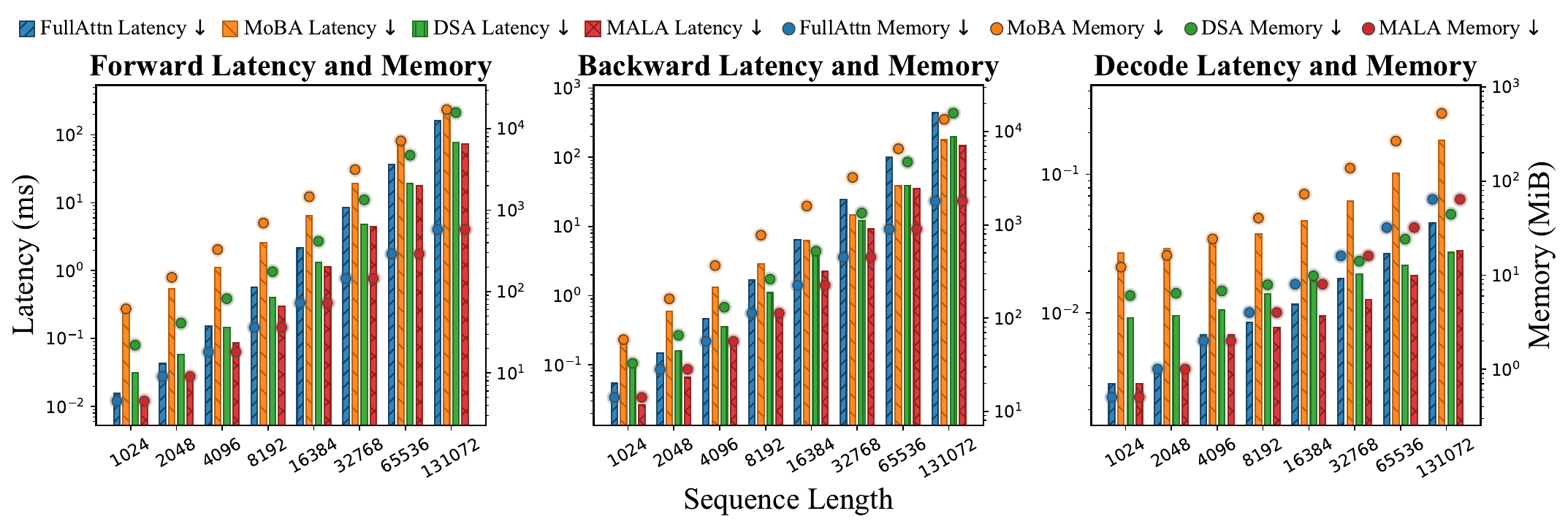}
    \vspace{-2.0em}
    \caption{
      \textbf{Operator Latency and Memory}.
      End-to-end attention-operator latency (bars, left axes) and per-rank peak memory (markers, right axes) for the training forward and backward passes and autoregressive decoding with tensor parallelism $TP=8$ on 8 H100 GPUs. Measurements include each method's complete score-discovery and selection path. \shortName{} reduces latency while retaining FullAttn-level memory through in-kernel allocation.
    }
    \label{fig:latency_memory}
    \vspace{-1.0em}
\end{figure}

\textbf{Operator Latency and Memory.}
Having established that normalized-mass allocation preserves operator behavior and long-range associations, we next measure whether it reduces the cost of complete attention operators. The sparse configurations target approximately 1,024 reference-equivalent post-score slots per query: MoBA uses 1,024 raw slots, DSA uses 256 because each slot contributes four times the reference PV work, and \shortName{} continues to use $\tau=1$ with distribution-determined work. This matching controls the work governed by the allocation decision; the benchmark~\citep{tillet2019triton} measures the complete operator, including score discovery, method-specific allocation or selection, and data movement. All measurements use the matched $TP=8$ configurations described in Appendix~\ref{sec:appendix:operator_acceleration_setup}. Figure~\ref{fig:latency_memory} shows that allocation from the attention loop translates into consistent gains as the context grows. At 128K tokens, \shortName{} reduces forward and backward latency during training by $2.2\times$ and $3.0\times$, and decoding latency during inference by $1.6\times$ relative to FullAttn. It provides the lowest forward and backward latency among the compared methods; its decoding latency is within 3\% of DSA while remaining $1.6\times$ faster than FullAttn. Across all three operators and every measured context length, its peak allocation matches FullAttn because allocation is fused into the attention loop and uses only standard attention state. These gains arise from avoiding low-contribution post-score execution; \shortName{} continues to discover scores over the complete causal context.

\begin{figure}[!t]
    \centering
    \includegraphics[width=\textwidth]{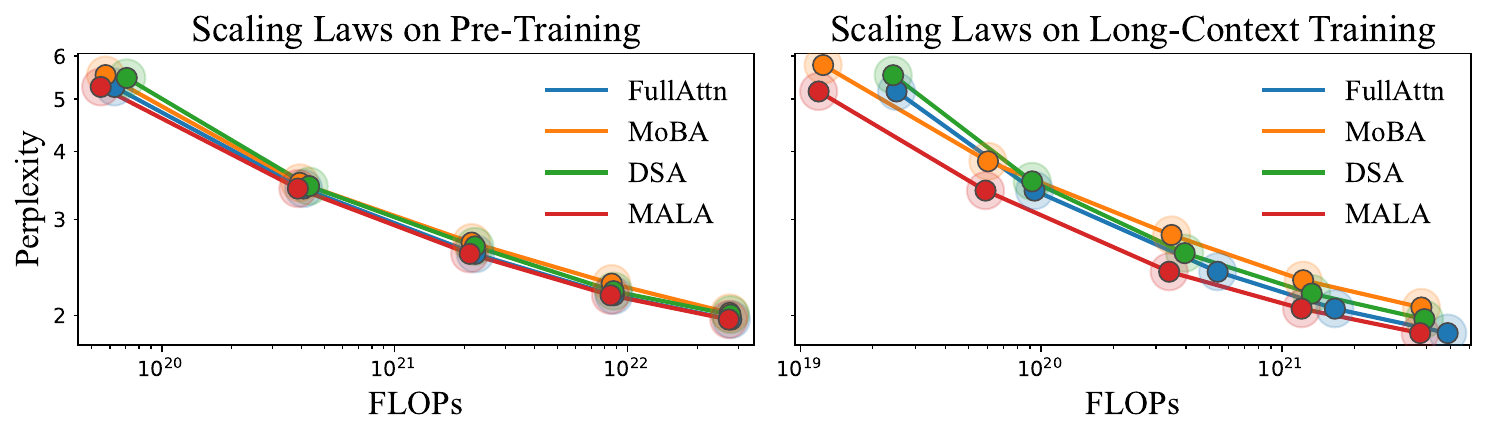}
    \vspace{-2.0em}
    \caption{
      \textbf{Scaling Laws}.
      Perplexity as a function of total training FLOPs for models from 0.6B to 14B parameters during 4K pre-training (left) and 32K long-context training (right). The FLOP accounting includes complete score discovery and method-specific allocation or selection costs. \shortName{} closely follows FullAttn in Perplexity while shifting the frontier toward lower training FLOPs.
    }
    \label{fig:scaling_laws_flops}
    \vspace{-1.0em}
\end{figure}

\begin{table}[!t]
    \centering
    \small
    \caption{
    \textbf{Model-level evaluation summary.}
    Average knowledge and reasoning scores, together with RULER scores at the native 32K context length and after YaRN extrapolation to 128K. Detailed per-task results and uncertainty estimates are reported in Appendix~\ref{sec:appendix:model_results}.
    }
    \vspace{-1.0em}
    \resizebox{\linewidth}{!}{
    \begin{tabular}{@{}llcccc@{}}
    \toprule
    \sc{Model Scale} & \sc{Method} & \sc{Knowledge} & \sc{Reasoning} & \sc{RULER 32K} & \sc{RULER 128K} \\
    \midrule
    14B & FullAttn & \underline{72.32} & \underline{64.46} & \underline{89.42} & \textbf{65.84} \\
    14B & MoBA & 70.01 & 58.07 & 86.85 & 60.17 \\
    14B & DSA & 70.11 & 62.04 & 87.59 & 63.05 \\
    14B & \textbf{\shortName{} (ours)} & \textbf{72.48} & \textbf{64.66} & \textbf{89.45} & \underline{65.75} \\
    \midrule
    32B & FullAttn & \underline{75.62} & \underline{75.67} & \underline{92.70} & \underline{82.03} \\
    32B & \textbf{\shortName{} (ours)} & \textbf{75.75} & \textbf{76.10} & \textbf{92.71} & \textbf{82.56} \\
    \bottomrule
    \end{tabular}
    }
    \vspace{-2.0em}
    \label{table:model_level_summary}
\end{table}

\textbf{Scaling Across Model Sizes.}
We next test whether normalized-mass allocation preserves language-model quality while reducing training computation across model sizes~\citep{kaplan2020scalinglawsneurallanguage,xiong2023effectivelongcontextscalingfoundation}. We study five model sizes from 0.6B to 14B under matched 4K pre-training and 32K long-context training configurations, using the same canonical tolerance for \shortName{} and the same post-score FLOP-equivalent configurations as in the operator study. Figure~\ref{fig:scaling_laws_flops} reports Perplexity against total training FLOPs, including parameterized model computation, complete score discovery, method-specific routing or indexing, and retained post-score execution. Across both stages and all model sizes, \shortName{} closely tracks FullAttn in Perplexity while shifting the quality-compute frontier toward lower FLOPs. At 14B, its Perplexity differs from FullAttn by less than 0.001 after both pre-training and long-context training, while reducing the corresponding total FLOPs by 2.5\% and 23.1\%. The larger gain at 32K follows from allocating only the post-score work warranted by the realized distribution, while the complete causal QK score-discovery cost remains included. Detailed architectures, training schedules, allocation configurations, and FLOP accounting are provided in Appendix~\ref{sec:appendix:scaling_laws_setup}.

\textbf{Model-Level Evaluation.}
We finally test whether the preceding operator fidelity and scaling trends transfer to complete language models. 
We evaluate all four 14B attention variants from the scaling-law study and extend the FullAttn-\shortName{} comparison to 32B models trained in a separate continued-training study.
Table~\ref{table:model_level_summary} shows that \shortName{} remains comparable to FullAttn in average knowledge, reasoning, native 32K retrieval, and YaRN-extrapolated 128K retrieval at both scales. At 14B, \shortName{} scores 72.48 and 64.66 on knowledge and reasoning versus 72.32 and 64.46 for FullAttn; at 32B, the corresponding averages are 75.75 and 76.10 versus 75.62 and 75.67. On native 32K RULER, \shortName{} reaches 89.45 versus 89.42 at 14B and 92.71 versus 92.70 at 32B. 
Under YaRN~\citep{peng2026yarn} extrapolation to 128K, it reaches 65.75 versus 65.84 at 14B and 82.56 versus 82.03 at 32B. Bold and underlining mark the best and second-best results, respectively, within each model-scale and context-length group. 
Detailed per-task results and uncertainty estimates are reported in Appendix~\ref{sec:appendix:model_results}. Together, these results show that attention-directed compute allocation preserves broad language-model capabilities and long-context behavior across model scales.

%% file: sections/related_work.tex
\section{Related Work}
\label{sec:related_work}

\shortName{} preserves complete causal score discovery and uses native softmax statistics to allocate post-score computation during training and inference; we relate this design to sparse attention, adaptive allocation and threshold calibration, hardware-aware sparse execution, and KV-cache management.

\textbf{Sparse attention.}
Fixed sparse attention uses local windows, structured blocks, or global tokens~\citep{child2019generating,beltagy2020longformerlongdocumenttransformer,zaheer2020big}.
Dynamic methods select tokens or blocks using retrieval scores, importance estimates, cache statistics, or learned routers~\citep{tang2024quest,lai2025flexprefill,li2024snapkv,zhang2023h2o,xiao2024infllm,qi2026pariskv,zhao2025infllmv2,gao2024seerattention,yuan2025nativesparseattentionhardwarealigned,lu2025moba,deepseekai2025deepseekv32pushingfrontieropen}.
By restricting the support before exact attention, these methods can avoid QK computation for excluded interactions.
\shortName{} retains complete causal QK score discovery and allocates post-score computation from the resulting scores and softmax state.

\textbf{Adaptive allocation and threshold calibration.}
Twilight and Double-P adapt the retained computation through hierarchical top-$p$ selection using estimated attention mass, with configurable cumulative-mass targets~\citep{lin2025twilightadaptiveattentionsparsity,ni2026doublephierarchicaltoppsparse}.
Top-Theta instead calibrates thresholds offline for individual layers, heads, and sequence lengths~\citep{berestizshevsky2026topthetaattentionsparsifyingtransformers}.
\shortName{} uses a shared normalized-contribution tolerance and derives allocation decisions directly from exact QK scores and native softmax normalizers inside the attention operator.
The same tolerance applies across heads, layers, and inputs without separately calibrated thresholds, while their realized attention distributions determine the retained post-score work.

\textbf{Hardware-aware sparse execution.}
FlashAttention reduces memory traffic through tiled execution and online softmax~\citep{dao2022flashattention,shah2024flashattention3,milakov2018onlinesoftmax}; FlexAttention and FlashInfer improve kernel programmability and serving efficiency~\citep{dong2024flexattentionprogrammingmodel,ye2025flashinferefficientcustomizableattention}.
SpargeAttention combines a predicted attention map with an online softmax-aware filter~\citep{zhang2025spargeattentionaccuratetrainingfreesparse}.
BLASST compares block maxima with running maxima and calibrates length-dependent thresholds for prefill and decoding; it also explores sparsity-aware training~\citep{yuan2025blasstdynamicblockedattention}.
\shortName{} tests scores against the accumulated or finalized softmax normalizer, using $\tau/L_q$ as a normalized-contribution tolerance.
This rule supports a forward omitted-mass guarantee and paired fused forward and backward operators, with nested backward support derived from standard attention state and a shared tolerance across training, prefill, and decoding.

\textbf{KV-cache management.}
KV-cache systems reduce inference cost by pruning, compressing, retrieving, or offloading cached states~\citep{zhang2023h2o,li2024snapkv,tang2024quest,xiao2024infllm,liu2024clusterkv,huang2025nosanativeoffloadablesparse}.
\shortName{} retains the full KV history for score discovery and value access, reducing post-score execution while leaving cache storage and placement as separate optimization opportunities.

%% file: sections/conclusion.tex
\section{Conclusion}

We introduced \fullName{} (\shortName{}), a fused attention primitive that allocates post-score computation from normalized contributions under a shared tolerance.
\shortName{} preserves complete causal QK score discovery and pairs online forward allocation with backward allocation from the saved normalizer, deriving nested retained support using standard attention state.
The same tolerance governs training, and inference.
QK complexity remains quadratic, with savings arising from skipped post-score execution.

The matched-work allocation study shows that \shortName{} closely approaches the per-instance reference-mass oracle under the same total post-score work.
Operator-fidelity evaluations show low output and gradient errors across context lengths with a shared tolerance, while controlled associative recall shows that \shortName{} closely tracks FullAttn as context grows.
The tensor-parallel operator benchmarks show lower forward, backward and decoding attention latency while retaining FullAttn-level peak operator memory.
The scaling-law experiments from 0.6B to 14B parameters show perplexity comparable to FullAttn with lower training FLOPs, and the model-level evaluations at 14B and 32B show comparable knowledge, reasoning, and long-context retrieval scores.
These results indicate that allocating post-score computation according to normalized attention contributions can retain the evaluated capabilities of FullAttn while reducing attention computation.

%% file: sections/appendix.tex
\newpage
\section{Complete Forward and Backward Algorithms}
\label{sec:appendix:algorithms}

Algorithms~\ref{alg:fsa_forward} and~\ref{alg:fsa_backward} give the complete tiled procedures corresponding to the online and offline allocation rules in Section~\ref{sec:fsa_rule}.

\begin{algorithm}[H]
    \small
    \caption{\fullName{} Forward}
    \label{alg:fsa_forward}
    \vspace{-0.25em}
    \begin{algorithmic}[1]
        \REQUIRE Matrices $\vQ, \vO \in \mathbb{R}^{N_q \times d_h}$ and $\vK, \vV \in \mathbb{R}^{N_k \times d_h}$. Vector $\vell \in \mathbb{R}^{N_q}$. Thresh $\tau$.
        \STATE Divide $\vQ, \vO$ into $T_r$ blocks of size $B_q$ and $\vK, \vV$ into $T_c$ blocks of size $B_k$.
        \FOR{$1 \le i \le T_r$}
            \STATE Initialize $\vO_i = (0) \in \mathbb{R}^{B_q \times d_h}$, $\vell_i = (0) \in \mathbb{R}^{B_q}$, and $\vm_i = (-\infty) \in \mathbb{R}^{B_q}$.
            \STATE Compute the per-row visible-key counts $\vL_i$.
            \STATE Load $\vQ_i$.
            \FOR{$T_c \ge j \ge 1$}
                \STATE Load $\vK_j$.
                \STATE Compute $\vS_{ij} = \vQ_i \vK_j^T \in \mathbb{R}^{B_q \times B_k}$ and Apply causal mask to $\vS_{ij}$ at boundaries.
                \STATE Compute $\tilde{m}_{ij} = \mathrm{rowmax}(\vS_{ij}) \in \mathbb{R}^{B_q}$.
                \IF{$\tilde{m}_{ij} - \vm_i - \log(\vell_i) < \log(\tau / \vL_i)$ for all rows}
                    \STATE \textbf{skip} block $(i, j)$ and \textbf{continue}.
                \ENDIF
                \STATE Load $\vV_j$.
                \STATE Compute $\vm_i^{\mathrm{new}} = \max(\vm_i, \tilde{m}_{ij})$.
                \STATE Compute $\tilde{\vP}_{ij} = \exp(\vS_{ij} - \vm_i^{\mathrm{new}}) \in \mathbb{R}^{B_q \times B_k}$.
                \STATE Compute $\tilde{\vell}_{ij} = \mathrm{rowsum}(\tilde{\vP}_{ij}) \in \mathbb{R}^{B_q}$ and $\vell_i^{\mathrm{new}} = \exp({\vm_i - \vm_i^{\mathrm{new}}}) \vell_i + \tilde{\vell}_{ij} \in \mathbb{R}^{B_q}$.
                \STATE Update $\vO_i\leftarrow \exp({\vm_i - \vm_i^{\mathrm{new}}})\vO_i + \tilde{\vP}_{ij} \vV_j \in \mathbb{R}^{B_q \times d_h}$.
                \STATE Update $\vell_i \leftarrow \vell_i^{\mathrm{new}}$ and $\vm_i \leftarrow \vm_i^{\mathrm{new}}$.
            \ENDFOR
            \STATE Update $\vO_i \leftarrow \diag(\vell_i)^{-1} \vO_i$ and $\vell_i \leftarrow \vm_i + \log(\vell_i)$.
            \STATE Store $\vO_i, \vell_i$.
        \ENDFOR
        \STATE Return $\vO, \vell$.
    \end{algorithmic}
    \vspace{-0.25em}
\end{algorithm}

\begin{algorithm}[H]
    \small
    \caption{\fullName{} Backward}
    \label{alg:fsa_backward}
    \vspace{-0.25em}
    \begin{algorithmic}[1]
        \REQUIRE Matrices $\vQ, \vO, \vdQ, \vdO \in \mathbb{R}^{N_q \times d_h}$ and $\vK, \vV, \vdK, \vdV \in \mathbb{R}^{N_k \times d_h}$. Vector $\vell \in \mathbb{R}^{N_q}$. Thresh $\tau$.
        \STATE Divide $\vQ, \vO, \vdQ, \vdO$ into $T_r$ blocks of size $B_q$ and $\vK, \vV, \vdK, \vdV$ into $T_c$ blocks of size $B_k$.
        \FOR{$1 \le j \le T_c$}
            \STATE Initialize $\vdK_j = (0) \in \mathbb{R}^{B_k \times d_h}, \vdV_j = (0) \in \mathbb{R}^{B_k \times d_h}$.
            \STATE Load $\vK_j, \vV_j$.
            \FOR{$j \le i \le T_r$}
                \STATE Compute the per-row visible-key counts $\vL_i$.
                \STATE Load $\vQ_i$.
                \STATE Compute $\vS_{ji} = \vK_j \vQ_i^T \in \mathbb{R}^{B_k \times B_q}$ and Apply causal mask to $\vS_{ji}$ at boundaries.
                \IF{$\vS_{ji} - \vell_i^\top < \log(\tau / \vL_i)$ for all rows}
                    \STATE \textbf{skip} block $(j, i)$ and \textbf{continue}.
                \ENDIF
                \STATE Load $\vO_i, \vdO_i$.
                \STATE Compute $\vP_{ji} = \exp(\vS_{ji} - \vell_i^\top) \in \mathbb{R}^{B_k \times B_q}$ and $\vdP_{ji} = \vV_j \vdO_i^\top \in \mathbb{R}^{B_k \times B_q}$.
                \STATE Compute $\vdS_{ji} = \vP_{ji} \circ (\vdP_{ji} - \mathrm{rowsum}(\vO_i \circ \vdO_i)^\top) \in \mathbb{R}^{B_k \times B_q}$.
                \STATE Update $\vdV_j \leftarrow \vdV_j + \vP_{ji} \vdO_i$ and $\vdK_j \leftarrow \vdK_j + \vdS_{ji} \vQ_i$.
                \STATE Store $\vdQ_i \leftarrow \vdQ_i + \vdS_{ji}^\top \vK_j$.
            \ENDFOR
            \STATE Store $\vdK_j, \vdV_j$.
        \ENDFOR
        \STATE Return $\vdQ, \vdK, \vdV$.
    \end{algorithmic}
    \vspace{-0.25em}
\end{algorithm}

\subsection{Fused Execution Details}
\label{sec:appendix:fused_execution}

\textbf{Fused forward execution.}
The \shortName{} decision is embedded in the ordinary tiled attention loop. Algorithm~\ref{alg:fsa_forward} parallelizes over query blocks. Within each query block, legal key blocks are visited in descending order, beginning with the diagonal. For each legal key block, the kernel first loads K and computes QK. It loads V and executes the online-softmax and PV path only when the contribution test retains the tile. The running maximum, normalization sum, and output accumulator remain on chip exactly as in FlashAttention. After the traversal, the operator writes the output $\vO$ and final $\vell$ required by backward. Allocation remains implicit in these standard attention states. The same fused forward operator serves training and inference prefill, with query blocks executing independently in both settings.

\textbf{Fused backward execution.}
Algorithm~\ref{alg:fsa_backward} uses the standard key-major traversal so that each program can accumulate $\vdK_j$ and $\vdV_j$ locally while contributing to $\vdQ_i$. For each key block, query blocks are visited in ascending order, beginning with the diagonal. QK is recomputed before the skip decision. For a retained tile, the kernel reconstructs its probability from the saved $\vell$ and executes the usual attention-backward equations; for a skipped tile, it saves the corresponding output-gradient load and value-dependent path. The log-sum-exp state $\vell$ saved by the forward pass also serves as the allocation state for backward, yielding a fused attention-backward primitive.

\textbf{Fused decoding execution.}
Autoregressive decoding applies the same forward allocation rule with a short query and a longer KV sequence. The kernel evaluates every causal QK tile, then loads V and executes softmax and PV for retained tiles. All cached keys therefore remain score-accessible. Under split-KV decoding, each split initializes its own running maximum and normalization sum and tests tiles against a partial normalizer. This typically retains at least the work selected under the corresponding unsplit normalizer. The partial outputs are then combined by the standard online-softmax reduction. The persistent KV cache remains intact, and the decode-memory claim concerns the attention operator's working set.

\textbf{Training and inference consistency.}
The same probability tolerance governs forward, backward, prefill, and decoding. Attention distributions and available normalizers determine the realized work in each setting. Backward can be more selective than forward because it tests against the final normalizer; this directly yields the support-nesting behavior described in Section~\ref{sec:fsa_rule}.

\section{Forward Approximation Guarantees}
\label{sec:appendix:fsa_mass_bound}

We state the forward guarantee for a single query row; applying it independently to every valid row gives the result used in Section~\ref{sec:fsa_rule}. Let $s_k$ be the score of causally visible key $k$, and let $R$ and $O$ denote the sets of keys retained and omitted by \shortName{}, respectively. Define the final retained and omitted unnormalized masses as
\begin{equation}
    Z_R=\sum_{k\in R}\exp(s_k),
    \qquad
    Z_O=\sum_{k\in O}\exp(s_k).
    \label{eq:appendix:retained_omitted_mass}
\end{equation}

\begin{proposition}[Omitted probability mass]
For a query with $L_q$ causally visible keys, the forward rule in Equation~\ref{eq:fsa_online_test} bounds the total probability mass assigned by dense softmax to omitted keys as
\begin{equation}
    \epsilon_q
    \defeq
    \frac{Z_O}{Z_R+Z_O}
    \le
    \frac{\tau}{1+\tau}.
    \label{eq:appendix:omitted_mass_bound}
\end{equation}
\end{proposition}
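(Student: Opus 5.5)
The plan is to reduce the claim to the unnormalized inequality $Z_O\le\tau Z_R$. The map $x\mapsto x/(Z_R+x)$ is increasing in $x\ge0$ whenever $Z_R>0$, so $Z_O\le\tau Z_R$ immediately gives $\epsilon_q\le \tau Z_R/(Z_R+\tau Z_R)=\tau/(1+\tau)$. The whole argument therefore rests on bounding each omitted key individually against the final retained mass $Z_R$. We then sum over the at most $L_q$ omitted keys, and the length normalization $\tau/L_q$ in Equation~\ref{eq:fsa_online_test} cancels the count.

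\textbf{Step 1 (nonempty retained set).} First check that $Z_R>0$. By the convention in Section~\ref{sec:fsa_rule}, before any key is retained we have $Z_q^{(t)}=0$ and the log-ratio is $+\infty$. Hence the first tile containing a legal key for row $q$ fails the skip test and is retained. (For causal rows the diagonal tile always contains the key $q$ itself.)

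\textbf{Step 2 (per-key bound at the moment of omission).} Fix an omitted key $k\in O$. It lies in some tile $(i,j)$ that was skipped at step $t$ of the traversal. Since $k$ is causally visible to $q$, row $q$ has a legal entry in that tile. A tile is skipped only if every valid row with a legal entry satisfies the strict inequality, so row $q$ does as well. Equation~\ref{eq:fsa_contribution_ratio} then gives
\[
\exp(s_k)\le \exp\Bigl(\max_{k'\in J_j\cap\mathcal{C}_q}s_{k'}\Bigr)=r_{qj}^{(t)}Z_q^{(t)}<\frac{\tau}{L_q}Z_q^{(t)}.
\]
This uses $Z_q^{(t)}>0$, which holds by Step 1 since a skip can only occur after something has been retained.

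\textbf{Step 3 (monotonicity of the normalizer).} Retained sets only grow along the traversal, because a skipped tile leaves $\vm_i,\vell_i$ unchanged and a retained tile only adds positive terms. Hence $R_q^{(t)}\subseteq R_q$ and, in exact arithmetic via Equation~\ref{eq:fsa_running_normalizer}, $Z_q^{(t)}\le Z_R$. Combining with Step 2 gives $\exp(s_k)<(\tau/L_q)Z_R$ for every $k\in O$.

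\textbf{Step 4 (summation).} Summing over $O$, and using $|O|\le L_q-|R|\le L_q-1<L_q$, we get
\[
Z_O<\frac{|O|}{L_q}\,\tau Z_R\le\tau Z_R.
\]
If $O$ is empty, then $Z_O=0$ and the bound is trivial. The reduction in the first paragraph then finishes the proof.

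The only delicate point is Step 2. We must make sure that the tile-level ``for all rows'' skip condition really transfers to row $q$ for every key omitted from that row. This is where the convention matters that rows with no legal key in a tile impose no constraint: such a tile contributes no keys to $O$ for that row. We must also make sure that the running denominator compared against is the retained mass at that time, which is only a lower bound for $Z_R$. Floating-point effects are excluded, since the statement is in exact arithmetic.
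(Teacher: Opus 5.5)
Your proposal is correct and follows essentially the same route as the paper's proof. Both arguments bound each omitted key by $\exp(s_k)<\tau Z_t/L_q$ at its test time, use $Z_t>0$ (the first legal tile is retained) and $Z_t\le Z_R$, sum over at most $L_q$ omitted keys to get $Z_O\le\tau Z_R$, and conclude via monotonicity of $x\mapsto x/(1+x)$; your refinement $|O|\le L_q-1$ yields a slightly sharper, strict inequality, but the statement does not need it.
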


\begin{proof}
Consider an omitted key $k$ and the retained normalizer $Z_t$ accumulated when its tile is tested. The first legal tile is necessarily retained, so $Z_t>0$ whenever a skip occurs. Equation~\ref{eq:fsa_online_test} and the tile row maximum imply
\begin{equation}
    \frac{\exp(s_k)}{Z_t}<\frac{\tau}{L_q}.
\end{equation}
The final retained normalizer satisfies $Z_R\ge Z_t$, hence $\exp(s_k)/Z_R<\tau/L_q$. Summing over omitted keys and using $|O|\le L_q$ gives $Z_O/Z_R\le\tau$. Therefore
\begin{equation}
    \epsilon_q
    =\frac{Z_O/Z_R}{1+Z_O/Z_R}
    \le\frac{\tau}{1+\tau}.
\end{equation}
\end{proof}

Let $p$ be the dense softmax distribution and let $\widetilde{p}$ be softmax renormalized over $R$, with zero probability on $O$. The retained probabilities gain total mass $\epsilon_q$, while the omitted probabilities lose the same mass, so
\begin{equation}
    \lVert p-\widetilde{p}\rVert_1
    =2\epsilon_q
    \le\frac{2\tau}{1+\tau}.
    \label{eq:appendix:distribution_l1_bound}
\end{equation}
If $\lVert\vv_k\rVert\le V_{max}$ for every visible value, the dense output $\vo=\sum_k p_k\vv_k$ and the retained output $\widetilde{\vo}=\sum_k\widetilde{p}_k\vv_k$ consequently satisfy
\begin{equation}
    \lVert\vo-\widetilde{\vo}\rVert
    \le
    \sum_k |p_k-\widetilde{p}_k|\lVert\vv_k\rVert
    \le
    \frac{2\tau}{1+\tau}V_{\max}.
    \label{eq:appendix:output_bound}
\end{equation}
These statements assume exact arithmetic. In the fused implementation, the running normalizer and contribution test are evaluated in finite precision, so the formal bounds hold up to the corresponding rounding error. Section~\ref{sec:experiments} measures omitted mass and output and gradient fidelity for the implemented operators directly.

\section{Experiment Settings}
\label{sec:appendix:experiment_setup}

Within each experiment, attention variants use matched model scales, depth, hidden size, data, and optimization settings while retaining their native attention-specific configurations. Unless stated otherwise, \shortName{} uses the canonical tolerance $\tau=1$, with realized work determined separately for each layer, head, input, and sequence length. The matched-work and operator-fidelity studies use the same 14B \shortName{} checkpoint obtained after the 32K long-context stage of the scaling-law study. Our software stack uses NVIDIA PyTorch container images~\citep{nv2022pytorch} and the Transformers framework~\citep{wolf-etal-2020-transformers}. Scaling-law ~\citep{kaplan2020scalinglawsneurallanguage,xiong2023effectivelongcontextscalingfoundation}and continued-training experiments use Megatron-LM~\citep{megatron-lm} on 128 NVIDIA H100 GPUs. Benchmark evaluation uses the EleutherAI LM Evaluation Harness~\citep{eval-harness} on 8 NVIDIA H100 GPUs. Operator benchmarking uses 8 H100 GPUs with tensor parallelism $TP=8$.

\subsection{Work and Cost Accounting}
\label{sec:appendix:work_accounting}

We distinguish three quantities throughout the experiments: realized allocation, architecture-normalized allocation, and complete system cost.

\paragraph{Mean post-score key slots per query.}
For method $a$, let $s_{a,q}$ be the number of key slots for which query row $q$ executes the post-score path after QK score discovery. A slot is counted when it participates in an executed softmax/value/PV tile or sparse-attention kernel call. Consequently, the count follows the operator's actual tile or block granularity and includes aligned or masked slots when the kernel still executes their post-score arithmetic; positions removed before this path are not counted. We report
\begin{equation}
    \bar{s}_a
    =
    \frac{1}{|\mathcal{Q}|}
    \sum_{q\in\mathcal{Q}} s_{a,q},
    \label{eq:appendix:mean_post_score_slots}
\end{equation}
where $\mathcal{Q}$ contains all query-row instances included by the experiment, including the relevant examples, layers, heads, and positions. This realized mean differs from a nominal Top-$K$ or block budget. In particular, causal clipping, block expansion, and tile sharing can change $\bar{s}_a$ even when two methods receive the same configured integer budget. We use this raw quantity for within-architecture comparisons, including the matched-work allocation study, to describe how much work \shortName{} realizes at a fixed tolerance, and in associative recall to define a common retained-interaction reference across methods.

\paragraph{Post-score FLOP-equivalent work.}
Raw slot counts do not define a compute-matched comparison when attention variants use different value dimensions or numbers of QO heads. For forward attention, the dominant allocated matrix multiplication is PV, whose cost is $2h_a d_{v,a}\bar{s}_a$ FLOPs per query token, summed over all QO heads, when one multiply-accumulate counts as two FLOPs. Relative to a reference configuration with $h_{\mathrm{ref}}$ QO heads and value dimension $d_{v,\mathrm{ref}}$, we therefore define
\begin{equation}
    \bar{s}^{\mathrm{eq}}_a
    =
    \bar{s}_a
    \frac{h_a d_{v,a}}
         {h_{\mathrm{ref}}d_{v,\mathrm{ref}}}.
    \label{eq:appendix:post_score_equivalent_work}
\end{equation}
We use the matched FullAttn/\shortName{} head configuration as the reference within each experiment. Thus, when the number of QO heads is shared, one DSA slot with $d_v=512$ is approximately four 128-dimensional reference slots. Softmax and elementwise operations are lower-order terms in this matching proxy and remain included in measured operator latency. We use this normalized quantity when a cross-architecture comparison is intended to control allocated computation, specifically in the operator and scaling-law configurations. Associative recall instead matches raw mean post-score slots, because it controls the number of retained interactions rather than their architecture-dependent FLOPs.

\paragraph{Full operator cost.}
Post-score equivalence controls only the computation governed by the allocation decision. It excludes score discovery and method-specific selection. Full operator cost additionally includes \shortName{}'s complete causal QK score discovery, MoBA's pooling, routing, Top-$K$, index construction, rearrangement, and merging, and DSA's indexer preparation, quantization, dense index scores, Top-$K$, indices, and sparse MLA score computation. We use synchronized end-to-end operator latency and peak memory for the systems comparison. For scaling-law plots, we count these sequence-dependent terms together with the shared model-training term. This separation prevents a raw key count from hiding architectural differences and prevents post-score matching from being presented as an end-to-end compute match.

\subsection{Normalized-Mass Allocation at Matched Work Setup}
\label{sec:appendix:matched_work_setup}

Throughout the matched-work and operator-fidelity studies, omitted mass denotes $1-\exp(\ell_{\mathrm{screened}}-\ell_{\mathrm{reference}})$, the probability mass assigned by the dense reference softmax to positions omitted by the screened operator. We report this fraction as a percentage in the experimental results.

We evaluate complete causal attention on 256 language-modeling sequences of length 8,192, covering every query position in every layer and query head. The resulting evaluation contains 58,720,256 runtime allocation decisions. For each decision, a screening-disabled reference provides the probability mass of every candidate region. The diagnostic controls retain the highest-mass regions under this reference; \shortName{} uses its actual online allocation and selection rule.

All policies execute exactly the same total number of post-score key slots, with a shared mean of approximately 1,024 slots per query induced by \shortName{}'s realized allocation. All three diagnostic controls use the finalized screening-disabled distribution to rank candidate regions by probability mass at the same granularity as \shortName{}; they differ only in the number of slots assigned to each decision. The position-only control averages \shortName{}'s realized work across inputs, layers, and heads at each query position, producing one shared position-dependent allocation. The static layer-head-position control averages only across inputs, producing a separate allocation for each query position, layer, and head that remains fixed across sequences. The per-instance control uses the exact number of slots realized by \shortName{} for each input, query position, layer, and head. Given an allocation, each control retains the highest-mass candidate regions. Data-independent integer rounding makes the two static allocations match \shortName{}'s total slot count exactly. The per-instance control therefore separates the value of instance-dependent work allocation from the approximation introduced by making the selection decision online. Because every row in this study uses the same attention representation and post-score kernel shape, equality in total slots is also equality in post-score FLOPs.

\subsection{Operator Fidelity Across Context Lengths Setup}
\label{sec:appendix:operator_fidelity_setup}

We randomly sample 256 sequences from a mixture of knowledge, reasoning, and retrieval data and evaluate prefixes of 1,024, 2,048, 4,096, 8,192, 16,384, and 32,768 tokens. At every layer and query position, we apply the standard screened operator and a screening-disabled reference to the same queries, keys, values, causal support, and kernel configuration. We record the realized forward and backward post-score key slots per query and recover the probability mass omitted by \shortName{} from the two softmax normalizers.

We measure relative $L_2$ errors in the output and in $\vdQ$, $\vdK$, and $\vdV$. Each relative error is the $L_2$ norm of the difference divided by the $L_2$ norm of the corresponding reference tensor, and is reported as a percentage. Backward measurements use a deterministic Rademacher probe for $\vdO$, providing a reproducible direction through the complete attention backward operator. We aggregate complete operators at head level, yielding 409,600 query-head-layer measurements and 81,920 KV-head-layer measurements at each context length. Means and percentiles are computed over these measurements; confidence intervals use sequence-level bootstrap so that positions, layers, and heads from the same sequence remain clustered.

\subsection{Associative Recall Setup}
\label{sec:appendix:associative_recall_setup}

Following prior work~\citep{arora2024zoology}, we evaluate associative recall with 256 key-value pairs. We consider sequence lengths of 1,024, 2,048, 4,096, and 8,192 and model dimensions $d_{model}\in\{64,128,256,512\}$. The repeated key-value content is placed contiguously at the beginning of each sequence, occupying the first $2n_{\mathrm{kv}}n_{\mathrm{pass}}$ tokens, where $n_{\mathrm{kv}}=256$ and $n_{\mathrm{pass}}$ is the number of passes. We then sample $n_{\mathrm{kv}}$ query positions without replacement from the remaining positions according to $p(t)\propto t^{a-1}$, where $t$ is the position within this suffix. We use the default $a=0.01$, so queries occur more frequently near the beginning of the suffix and become progressively sparser toward its end. We do not insert random padding or additional noise tokens. This setup tests recall without adding a separate task of filtering random padding or noise. The dataset contains 250K training examples and 1K test examples, and all models are trained for 100 epochs.

FullAttn attends to the complete causal history. Fixed-budget sparse variants use a common nominal ceiling of 1,024 post-score key slots per query. SWA, Seer, InfLLMv2, NSA, MoBA, and DSA retain their native support or selection mechanisms; their realized means are computed after causal clipping and implementation-specific block expansion. \shortName{} uses the canonical tolerance $\tau=1$; the fused operator realizes $\bar{s}$ near the same empirical work scale. This comparison uses raw retained interactions as its reference, aligning the number of accessible associations across methods. DSA follows the same raw-slot ceiling even though its $d_v=512$ representation makes each retained slot more expensive than a 128-dimensional \shortName{} or MoBA slot.

\begin{table}[H]
    \centering
    \small
    \caption{
    \textbf{Associative-recall allocation configurations.}
    Fixed-budget sparse methods use a common 1,024-slot ceiling, with realized work computed after causal clipping and native block expansion. The raw-slot comparison aligns retained interactions; DSA therefore uses the same slot ceiling despite its larger value dimension. \shortName{} uses $\tau=1$, with work determined by normalized-mass allocation.
    }
    \label{tab:appendix:ar_work_config}
    \resizebox{\linewidth}{!}{
    \begin{tabular}{@{}lcccc@{}}
    \toprule
    \sc{Method} & \sc{Allocation Rule} & $d_v/d_{v,\mathrm{ref}}$ & \sc{Nominal Slot Ceiling} & \sc{Raw-Slot Reference} \\
    \midrule
    FullAttn & Complete causal history & $1\times$ & Full & Full \\
    SWA / Seer / InfLLMv2 / NSA & Native support or selector & $1\times$ & 1,024 & after causal clipping \\
    MoBA & Native block routing & $1\times$ & 1,024 & after causal clipping \\
    DSA & Token-level Top-$K$ & $4\times$ & 1,024 & after causal clipping \\
    \textbf{\shortName{}} & Normalized-mass tolerance $\tau=1$ & $1\times$ & - & measured near reference \\
    \bottomrule
    \end{tabular}
    }
\end{table}

\subsection{Operator Latency and Memory Setup}
\label{sec:appendix:operator_acceleration_setup}

We benchmark causal attention at sequence lengths from 1,024 to 131,072 on 8 NVIDIA H100 GPUs with tensor parallelism $TP=8$. All methods use batch size one. Unless a configuration runs out of memory, latency is averaged over 100 iterations after 25 warm-up iterations. Latency is the synchronized wall-clock time of the tensor-parallel group, and peak memory is the maximum per-rank allocation across the eight ranks. Table~\ref{tab:appendix:operator_configs} gives the operator shapes and allocation rules. Sparse configurations target approximately 1,024 reference-equivalent forward post-score slots per query; \shortName{} realizes its work from $\tau=1$, while MoBA and DSA use their native block and token selectors. FullAttn processes the complete causal context.

The measurements cover the complete attention-operator path. MoBA includes representative construction, router logits, Top-$K$, index construction, rearrangement, sparse attention, and output merging. DSA includes BF16 indexer inputs, FP8 quantization, dense indexer logits, Top-$K$/indices, and sparse MLA. \shortName{} includes complete causal QK score discovery, its fused contribution test, and the retained post-score path. Model-level QKV/output projections, MLP layers, optimizer updates, and pipeline communication are outside this module benchmark. Peak memory is measured over the same operator path and represents operator working memory; persistent serving-time KV-cache capacity remains unchanged. Because \shortName{} allocation uses only standard attention state and retains all KV entries, its expected memory behavior is parity with the matched FullAttn operator.

\footnotetext[1]{The FullAttn implementation is available at \href{https://github.com/Dao-AILab/flash-attention}{\nolinkurl{github.com/Dao-AILab/flash-attention}}.}
\footnotetext[2]{The MoBA implementation is available at \href{https://github.com/MoonshotAI/MoBA}{\nolinkurl{github.com/MoonshotAI/MoBA}}.}
\footnotetext[3]{The DSA indexer implementation is available at \href{https://github.com/deepseek-ai/DeepGEMM}{\nolinkurl{github.com/deepseek-ai/DeepGEMM}}.}
\footnotetext[4]{The DSA implementation is available at \href{https://github.com/Dao-AILab/flash-attention/tree/main/flash_attn/cute}{\nolinkurl{github.com/Dao-AILab/flash-attention/flash_attn/cute}}.}

\begin{table}[H]
    \centering
    \small
    \caption{
    \textbf{Operator benchmark configurations.}
    The sparse methods target comparable forward post-score FLOP-equivalent work, while reported latency and memory include their complete score-discovery and selection paths.
    }
    \label{tab:appendix:operator_configs}
    \resizebox{\linewidth}{!}{
    \begin{tabular}{@{}lccccccc@{}}
    \toprule
    \sc{Method} & $n_h$ & $n_{h_{kv}}$ & $d_{qk}$ & $d_v$ & \sc{Allocation} & \sc{Target Eq. Slots} & \sc{Precision} \\
    \midrule
    FullAttn\footnotemark[1] & 64 & 8 & 128 & 128 & Full causal context & Full & BF16 \\
    MoBA\footnotemark[2] & 64 & 64 & 128 & 128 & $B=128$, native routing & $\approx$1,024 & BF16 \\
    DSA\footnotemark[3]\textsuperscript{,}\footnotemark[4] & 64 & 1 & 576 & 512 & Top-$K=256$, FP8 indexer & $\approx$1,024 & BF16/FP8 \\
    \textbf{\shortName{}} & 64 & 8 & 128 & 128 & $\tau=1$, fused allocation & measured $\approx$1,024 & BF16 \\
    \bottomrule
    \end{tabular}
    }
\end{table}

\subsection{Scaling Laws Setup}
\label{sec:appendix:scaling_laws_setup}

The scaling-law~\citep{kaplan2020scalinglawsneurallanguage,xiong2023effectivelongcontextscalingfoundation} experiments use SmolLMCorpus~\citep{benallal2024smollmcorpus}, the Qwen3 tokenizer~\citep{qwen32025}, the AdamW optimizer~\citep{Loshchilov2017FixingWD}, the WSD learning-rate schedule~\citep{hägele2024scalinglawscomputeoptimaltraining}, and the compute-optimal scaling principles~\citep{li2025predictablescalei,hoffmann2022empirical}. We use a fixed random seed of 42 for all scaling-law runs.
We summarize the meaning of the columns in Table~\ref{tab:scaling_laws_configurations} and clarify which hyperparameters are used by each attention variant.

\begin{itemize}
    \item \textbf{Params}: target model scale. Parameter counts are matched approximately because the attention projections differ across variants.
    \item \textbf{PT/LCT Tok}: total numbers of tokens used in pre-training (PT) and long-context training (LCT), respectively. Each entry reports the two values in the order \texttt{PT, LCT}.
    \item \textbf{Batch}: tokens per optimization step.
    \item \textbf{PT/LCT LR}: learning rates used during PT and LCT, respectively. Each entry reports the two values in the order \texttt{PT, LCT}.
    \item $n_{layers}$: number of Transformer layers.
    \item $d_{model}$: model hidden size.
    \item $n_h$: number of QO heads.
    \item $n_{h_{kv}}$: number of KV heads.
    \item $d_{qk}$ and $d_v$: per-head dimensions of the query/key and value representations, respectively.
    \item \textbf{PS FLOPs/Slot}: forward post-score PV FLOPs contributed by one raw key slot, aggregated over all QO heads. Counting one multiply-accumulate as two FLOPs, this is $2h d_v$. It is the per-slot factor used by Equation~\ref{eq:appendix:post_score_equivalent_work}, not the full attention cost.
    \item \textbf{PT Alloc} and \textbf{LCT Alloc}: allocation controls used during PT and LCT. FullAttn entries give the complete context length, MoBA and DSA entries give nominal raw selected-key budgets, and \shortName{} entries give its normalized-mass tolerance.
\end{itemize}

For models from 0.6B to 14B, training proceeds in two stages. PT uses a sequence length of 4,096 and a WSD learning-rate schedule~\citep{hägele2024scalinglawscomputeoptimaltraining} whose peak learning rate is reported in the PT entry of Table~\ref{tab:scaling_laws_configurations}. The decay phase ends at the corresponding LCT learning rate, which is $0.1\times$ the PT peak. LCT initializes from the final PT checkpoint, increases sequence length to 32,768, and uses the reported constant LCT learning rate.

\begin{itemize}
    \item \textbf{FullAttn}: standard causal scaled dot-product attention over the complete 4,096-token PT or 32,768-token LCT sequence.
    \item \textbf{MoBA}: block-routed attention with one KV head per QO head and a nominal 1,024-slot budget during both PT and LCT.
    \item \textbf{DSA}: sparse latent attention with one KV head, $d_{qk}=576$, and $d_v=512$. Its post-score PV cost per raw slot is four times that of MoBA or \shortName{}, so we use a nominal budget of 256 during both PT and LCT.
    \item \textbf{\shortName{}}: grouped-query attention with eight KV heads and the canonical tolerance $\tau=1$ throughout PT and LCT. Normalized-mass allocation yields a realized mean near the 1,024-slot reference level.
\end{itemize}

Each method retains its native head configuration: FullAttn and \shortName{} use eight KV heads, MoBA uses one KV head per QO head, and DSA uses a single latent KV head. The sparse variants approximately match forward post-score FLOP-equivalent work across these configurations. Full training FLOPs additionally include score discovery, routing, indexing, and Top-$K$ costs as applicable. FullAttn attends to the complete context and serves as the dense reference.

\begin{table}[H]
    \centering
    \small
    \caption{
    \textbf{Self-Attention Variants Scaling Laws Configurations.}
    Complete model, training, and attention configurations used in the scaling-law experiments. PS FLOPs/Slot describes allocated forward post-score work; the FLOP-axis additionally includes complete score discovery and method-specific selection.
    }
    \label{tab:scaling_laws_configurations}
    \resizebox{\linewidth}{!}{
    \begin{tabular}{@{}lccccccccccccc@{}}
    \toprule
    \sc{Algos} & \sc{Params} & \sc{PT/LCT Tok} & \sc{Batch} & \sc{PT/LCT LR} & $n_{layers}$ & $d_{model}$ & $n_h$ & $n_{h_{kv}}$ & $d_{qk}$ & $d_v$ & \sc{PS FLOPs/Slot} & \sc{PT Alloc} & \sc{LCT Alloc} \\
    \midrule
    FullAttn & $\approx$0.6B & 12B,1.5B & 0.256M & 1e-3,1e-4 & 28 & 1024 & 16 & 8 & 128 & 128 & 4096 & 4096 & 32768 \\
    MoBA & $\approx$0.6B & 12B,1.5B & 0.256M & 1e-3,1e-4 & 28 & 1024 & 16 & 16 & 128 & 128 & 4096 & $K=1024$ & $K=1024$ \\
    DSA & $\approx$0.6B & 12B,1.5B & 0.256M & 1e-3,1e-4 & 28 & 1024 & 16 & 1 & 576 & 512 & 16384 & $K=256$ & $K=256$ \\
    \shortName{} & $\approx$0.6B & 12B,1.5B & 0.256M & 1e-3,1e-4 & 28 & 1024 & 16 & 8 & 128 & 128 & 4096 & $\tau=1$ & $\tau=1$ \\
    \midrule
    FullAttn & $\approx$1.7B & 34B,4B & 0.512M & 8e-4,8e-5 & 28 & 2048 & 16 & 8 & 128 & 128 & 4096 & 4096 & 32768 \\
    MoBA & $\approx$1.7B & 34B,4B & 0.512M & 8e-4,8e-5 & 28 & 2048 & 16 & 16 & 128 & 128 & 4096 & $K=1024$ & $K=1024$ \\
    DSA & $\approx$1.7B & 34B,4B & 0.512M & 8e-4,8e-5 & 28 & 2048 & 16 & 1 & 576 & 512 & 16384 & $K=256$ & $K=256$ \\
    \shortName{} & $\approx$1.7B & 34B,4B & 0.512M & 8e-4,8e-5 & 28 & 2048 & 16 & 8 & 128 & 128 & 4096 & $\tau=1$ & $\tau=1$ \\
    \midrule
    FullAttn & $\approx$4B & 80B,10B & 1M & 6e-4,6e-5 & 32 & 2560 & 32 & 8 & 128 & 128 & 8192 & 4096 & 32768 \\
    MoBA & $\approx$4B & 80B,10B & 1M & 6e-4,6e-5 & 32 & 2560 & 32 & 32 & 128 & 128 & 8192 & $K=1024$ & $K=1024$ \\
    DSA & $\approx$4B & 80B,10B & 1M & 6e-4,6e-5 & 32 & 2560 & 32 & 1 & 576 & 512 & 32768 & $K=256$ & $K=256$ \\
    \shortName{} & $\approx$4B & 80B,10B & 1M & 6e-4,6e-5 & 32 & 2560 & 32 & 8 & 128 & 128 & 8192 & $\tau=1$ & $\tau=1$ \\
    \midrule
    FullAttn & $\approx$8B & 164B,20B & 1.6M & 4e-4,4e-5 & 36 & 4096 & 32 & 8 & 128 & 128 & 8192 & 4096 & 32768 \\
    MoBA & $\approx$8B & 164B,20B & 1.6M & 4e-4,4e-5 & 36 & 4096 & 32 & 32 & 128 & 128 & 8192 & $K=1024$ & $K=1024$ \\
    DSA & $\approx$8B & 164B,20B & 1.6M & 4e-4,4e-5 & 36 & 4096 & 32 & 1 & 576 & 512 & 32768 & $K=256$ & $K=256$ \\
    \shortName{} & $\approx$8B & 164B,20B & 1.6M & 4e-4,4e-5 & 36 & 4096 & 32 & 8 & 128 & 128 & 8192 & $\tau=1$ & $\tau=1$ \\
    \midrule
    FullAttn & $\approx$14B & 296B,36B & 2M & 3e-4,3e-5 & 40 & 5120 & 40 & 8 & 128 & 128 & 10240 & 4096 & 32768 \\
    MoBA & $\approx$14B & 296B,36B & 2M & 3e-4,3e-5 & 40 & 5120 & 40 & 40 & 128 & 128 & 10240 & $K=1024$ & $K=1024$ \\
    DSA & $\approx$14B & 296B,36B & 2M & 3e-4,3e-5 & 40 & 5120 & 40 & 1 & 576 & 512 & 40960 & $K=256$ & $K=256$ \\
    \shortName{} & $\approx$14B & 296B,36B & 2M & 3e-4,3e-5 & 40 & 5120 & 40 & 8 & 128 & 128 & 10240 & $\tau=1$ & $\tau=1$ \\
    \bottomrule
    \end{tabular}
    }
\end{table}

The FLOP-axis uses the cumulative training FLOPs reported by Megatron-LM and therefore covers the complete model computation. To expose the attention-specific contribution to this statistic, we additionally record the work executed by each attention operator. We count one multiply-accumulate as two FLOPs and include the QK recomputation used by attention backward. Let $a_n=(n+1)/2$ be the mean number of legal causal keys per query for a sequence of length $n$, and let $\bar{K}=K-K(K-1)/(2n)$ be the corresponding mean retained work after causally clipping a fixed budget $K\le n$. The per-layer, per-token sequence-dependent attention terms are
\begin{align}
    C_{\mathrm{FullAttn}}
    &=8a_nhd_{qk}+6a_nhd_v, \\
    C_{\mathrm{MoBA}}
    &=8\bar{K}hd_{qk}+6\bar{K}hd_v+C_{\mathrm{router}}, \\
    C_{\mathrm{DSA}}
    &=8\bar{K}hd_{qk}+6\bar{K}hd_v+C_{\mathrm{indexer}}, \\
    C_{\shortName{}}
    &=4a_nhd_{qk}+4\bar{s}_bhd_{qk}
      +2\bar{s}_fhd_v+4\bar{s}_bhd_v.
    \label{eq:appendix:scaling_attention_flops}
\end{align}
For \shortName{}, $\bar{s}_f$ and $\bar{s}_b$ are the mean forward and backward post-score slots per query measured during training. The 4K stage records $(\bar{s}_f,\bar{s}_b)=(906.78,896.74)$, and the 32K stage records $(1{,}065.95,1{,}057.64)$. Its first term counts forward score discovery and backward QK recomputation over the complete causal region; the remaining terms count forward PV and the retained $\vdQ$, $\vdK$, $\vdP$, and $\vdV$ paths. $C_{\mathrm{router}}$ includes MoBA pooling, dense query-to-block routing scores, and Top-$K$ comparisons. $C_{\mathrm{indexer}}$ includes DSA's dense lightning-indexer score path, index transforms and quantization arithmetic, and Top-$K$ comparisons.

\subsection{Sparse Adaptation via Continued Training}
\label{sec:appendix:sparse_adaptation_setup}

We study adaptation at 32B through long-context continued training, using FullAttn as the dense reference. Both models are initialized from the same Qwen3-32B checkpoint and trained at sequence length 32,768 for 64B tokens with a batch size of 4M tokens and a constant learning rate of $1\times10^{-5}$. Both configurations use 64 layers, hidden size 5,120, 64 QO heads, 8 KV heads, and $d_{qk}=d_v=128$. FullAttn uses the complete causal context. \shortName{} uses the canonical tolerance $\tau=1$, retains complete causal QK score discovery, and records its distribution-determined post-score work.

\subsection{Benchmark Evaluation}
\label{sec:appendix:benchmark_setup}

We evaluate two groups of models. At 14B, we compare FullAttn, MoBA, DSA, and \shortName{} from the scaling-law study in Appendix~\ref{sec:appendix:scaling_laws_setup}. At 32B, we compare the FullAttn and \shortName{} models from the continued-training setup in Appendix~\ref{sec:appendix:sparse_adaptation_setup}.
For each task, we report the mean and standard deviation over five evaluation runs with seeds 0, 42, 233, 666, and 1234.
The standard benchmark suite covers knowledge, reasoning, and retrieval through MMLU~\citep{hendrycks2021measuring,lyu2024probabilitiesunveilingmisalignmentevaluating}, MMLU-Pro~\citep{wang2024mmluprorobustchallengingmultitask}, BBH~\citep{suzgun2023challenging}, HellaSwag~\citep{zellers2019hellaswag}, OBQA~\citep{mihaylov2018can}, WinoGrande~\citep{sakaguchi2021winogrande}, PIQA~\citep{bisk2020piqa}, GSM8K~\citep{cobbe2021training}, Hendrycks-Math~\citep{hendrycks2021measuringmathematicalproblemsolving}, ARC-C~\citep{clark2018think}, AGIEval~\citep{zhong2023agievalhumancentricbenchmarkevaluating}, GPQA-Diamond~\citep{rein2023gpqagraduatelevelgoogleproofqa}, and RULER~\citep{hsieh2024ruler}. We evaluate RULER at the native 32K context length and, after YaRN position extrapolation, at 128K. Benchmark prompts, shot counts, decoding settings, seeds, and metric aggregation are held fixed across attention variants within each model scale.

\section{Detailed Model-Level Results}
\label{sec:appendix:model_results}

Tables~\ref{table:knowledge_benchmark},~\ref{table:reasoning_benchmark}, and~\ref{table:retrieval_benchmark} report the per-task results summarized in Table~\ref{table:model_level_summary}.

\begin{table}[H]
    \centering
    \caption{
    \textbf{Knowledge Benchmark Results}.
    Knowledge evaluation of the scaling-law models at 14B and the continued-training models at 32B. \shortName{} produces averages comparable to FullAttn at both scales and remains close across individual tasks.
    }
    \vspace{-0.75em}
    \resizebox{\linewidth}{!}{
    \begin{tabular}{@{}lccccccccccccccc@{}}
    \toprule
    \sc{Model} & \sc{MMLU} & \sc{MMLU-Pro} & \sc{BBH} & \sc{HellaSwag} & \sc{OBQA} & \sc{WinoGrande} & \sc{Avg}
    \\
    & \sc{Gen|5-shot} & \sc{5-shot} & \sc{CoT|3-shot} & \sc{5-shot} & \sc{5-shot} & \sc{5-shot} &
    \\
    \midrule
    \multicolumn{8}{c}{\text{14B-Non-Thinking, temperature=0.6, top\_k=20, top\_p=0.95, max\_gen\_toks=32768, seed=[0, 42, 233, 666, 1234]}} \\
    \midrule
    FullAttn & \textbf{78.21$\pm$0.31} & \textbf{64.96$\pm$0.46} & \textbf{82.59$\pm$0.48} & \textbf{80.64$\pm$0.38} & 49.4$\pm2.17$ & \underline{78.16$\pm$1.14} & \underline{72.32} \\
    MoBA & 75.78$\pm$0.48 & 58.42$\pm$0.84 & 82.1$\pm$0.44 & 79.7$\pm$0.37 & \underline{49.49$\pm$2.07} & 74.57$\pm$1.47 & 70.01 \\
    DSA & 72.5$\pm$0.53 & 59.56$\pm$0.65 & 81.42$\pm$0.62 & 80.22$\pm$0.31 & 49.41$\pm$2.25 & 77.57$\pm$1.19 & 70.11 \\
    \shortName{} (ours) & \underline{78.14$\pm$0.31} & \underline{64.67$\pm$0.44} & \underline{82.54$\pm$0.48} & \underline{80.61$\pm$0.36} & \textbf{50.0$\pm$2.17} & \textbf{78.93$\pm$1.13} & \textbf{72.48} \\
    \midrule
    \multicolumn{8}{c}{\text{32B-Thinking, temperature=0.6, top\_k=20, top\_p=0.95, max\_gen\_toks=32768, seed=[0, 42, 233, 666, 1234]}} \\
    \midrule
    FullAttn & \underline{77.27$\pm$0.34} & \textbf{68.57$\pm$0.41} & \textbf{88.39$\pm$0.54} & \underline{83.4$\pm$0.37} & \textbf{55.0$\pm$2.23} & \textbf{81.14$\pm$1.1} & \underline{75.62} \\
    \shortName{} (ours) & \textbf{77.37$\pm$0.33} & \underline{68.33$\pm$0.41} & \underline{88.23$\pm$0.55} & \textbf{84.69$\pm$0.36} & \textbf{55.0$\pm$2.23} & \underline{80.87$\pm$1.13} & \textbf{75.75} \\
    \bottomrule
    \end{tabular}
  }
  \vspace{-1.0em}
  \label{table:knowledge_benchmark}
\end{table}

\paragraph{Knowledge performance.}
Table~\ref{table:knowledge_benchmark} shows that \shortName{} retains knowledge benchmark performance close to FullAttn, with average scores of 72.48 versus 72.32 at 14B and 75.75 versus 75.62 at 32B.
At 14B, the differences on MMLU, MMLU-Pro, BBH, and HellaSwag are all within 0.29 points.
At 32B, the largest difference is a 1.29-point gain on HellaSwag, while all other differences remain within 0.27 points.
The small aggregate gains therefore support comparable performance across the evaluated knowledge tasks rather than a uniform improvement.

\begin{table}[H]
  \centering
  \caption{
    \textbf{Reasoning Benchmark Results}.
    Reasoning evaluation of the scaling-law models at 14B and the continued-training models at 32B. \shortName{} matches FullAttn on average at 14B and 32B without a fixed retained-work budget.
  }
  \vspace{-0.75em}
  \resizebox{\linewidth}{!}{
    \begin{tabular}{@{}lccccccccccccccc@{}}
    \toprule
    \sc{Model} & \sc{PIQA} & \sc{GSM8K} & \sc{Hendrycks-Math} & \sc{ARC-C} & \sc{AGIEval} & \sc{GPQA-Diamond} & \sc{Avg}
    \\
    & \sc{5-shot} & \sc{CoT|4-shot} & \sc{CoT|4-shot} & \sc{0-shot} & \sc{0-shot} & \sc{CoT|3-shot} &
    \\
    \midrule
    \multicolumn{8}{c}{\text{14B-Non-Thinking, temperature=0.6, top\_k=20, top\_p=0.95, max\_gen\_toks=32768, seed=[0, 42, 233, 666, 1234]}} \\
    \midrule
    FullAttn & \textbf{81.63$\pm$0.86} & \underline{90.32$\pm$0.93} & \textbf{62.46$\pm$0.65} & \textbf{61.95$\pm$1.44} & \textbf{50.56$\pm$0.71} & 39.87$\pm$3.47 & \underline{64.46} \\
    MoBA & 77.54$\pm$0.94 & 87.53$\pm$1.02 & 38.67$\pm$1.39 & 61.35$\pm$0.63 & 46.5$\pm$0.81 & 36.87$\pm$3.52 & 58.07 \\
    DSA & 80.35$\pm$0.82 & 89.38$\pm$0.95 & 53.24$\pm$0.86 & 61.74$\pm$1.33 & 47.0$\pm$0.69 & \underline{40.53$\pm$3.86} & 62.04 \\
    \shortName{} (ours) & \underline{81.62$\pm$0.85} & \textbf{90.83$\pm$0.9} & \underline{62.34$\pm$0.64} & \underline{61.93$\pm$1.47} & \underline{50.54$\pm$0.71} & \textbf{40.67$\pm$3.32} & \textbf{64.66} \\
    \midrule
    \multicolumn{8}{c}{\text{32B-Thinking, temperature=0.6, top\_k=20, top\_p=0.95, max\_gen\_toks=32768, seed=[0, 42, 233, 666, 1234]}} \\
    \midrule
    FullAttn & \textbf{85.97$\pm$0.88} & \underline{93.33$\pm$0.69} & \textbf{94.94$\pm$0.82} & \underline{69.54$\pm$1.34} & \textbf{54.18$\pm$0.68} & \underline{56.06$\pm$3.54} & \underline{75.67} \\
    \shortName{} (ours) & \underline{85.94$\pm$0.88} & \textbf{95.45$\pm$0.57} & \underline{94.25$\pm$0.83} & \textbf{69.8$\pm$1.34} & \underline{53.87$\pm$0.66} & \textbf{57.29$\pm$2.89} & \textbf{76.1} \\
    \bottomrule
    \end{tabular}
  }
  \vspace{-1.0em}
  \label{table:reasoning_benchmark}
\end{table}

\paragraph{Reasoning performance.}
Table~\ref{table:reasoning_benchmark} shows comparable aggregate reasoning performance, with average scores of 64.66 versus 64.46 for FullAttn at 14B and 76.10 versus 75.67 at 32B.
At 14B, \shortName{} remains within 0.12 points of FullAttn on PIQA, Hendrycks-Math, ARC-C, and AGIEval, with higher scores on GSM8K and GPQA-Diamond.
At 32B, gains on GSM8K and GPQA-Diamond coexist with modest decreases on Hendrycks-Math and AGIEval.
These results indicate that the reduction in post-score computation preserves the evaluated reasoning performance overall, with task-specific differences.

\begin{table}[H]
  \centering
  \caption{
    \textbf{Retrieval Benchmark Results}.
    RULER evaluation at the native 32K context length and after YaRN extrapolation to 128K. \shortName{} remains comparable to FullAttn across both model scales and context lengths.
  }
  \vspace{-0.75em}
  \resizebox{\linewidth}{!}{
    \begin{tabular}{@{}lccccccccc@{}}
    \toprule
    \sc{Model} & \sc{NIAH-S} & \sc{NIAH-MK} & \sc{NIAH-MQ} & \sc{NIAH-MV} & \sc{RULER-VT} & \sc{RULER-CWE} & \sc{RULER-FWE} & \sc{RULER-QA} & \sc{Avg} \\
    \\
    \midrule
    \multicolumn{10}{c}{\text{14B Native 32K Sequence Length, seed=[0, 42, 233, 666, 1234]}} \\
    \midrule
    FullAttn & \textbf{100$\pm$0} & \textbf{98.0$\pm$0.65} & \textbf{98.8$\pm$0.68} & \textbf{95.5$\pm$0.72} & \textbf{99.44$\pm$0.2} & \textbf{75.7$\pm$0.69} & \underline{91.67$\pm$0.71} & \underline{56.28$\pm$1.2} & \underline{89.42} \\
    MoBA & 99.2$\pm$0.4 & 93.2$\pm$0.66 & 98.3$\pm$0.71 & 94.8$\pm$0.73 & \underline{99.08$\pm$0.33} & 66.8$\pm$0.99 & 91.3$\pm$0.72 & 52.02$\pm$1.64 & 86.85 \\
    DSA & \underline{99.8$\pm$0.2} & 95.2$\pm$0.69 & 97.8$\pm$0.79 & \underline{95.4$\pm$0.74} & 98.87$\pm$0.31 & 69.05$\pm$0.89 & 91.1$\pm$0.74 & 53.51$\pm$1.52 & 87.59 \\
    \shortName{} (ours) & \textbf{100$\pm$0} & \underline{97.5$\pm$0.68} & \underline{98.6$\pm$0.68} & 95.3$\pm$0.73 & 99.03$\pm$0.22 & \underline{74.32$\pm$0.74} & \textbf{93.76$\pm$0.59} & \textbf{57.05$\pm$1.18} & \textbf{89.45} \\
    \midrule
    \multicolumn{10}{c}{\text{32B Native 32K Sequence Length, seed=[0, 42, 233, 666, 1234]}} \\
    \midrule
    FullAttn & \textbf{100$\pm$0} & \textbf{99.87$\pm$0.09} & \underline{99.8$\pm$0.08} & \textbf{99.4$\pm$0.13} & \underline{99.64$\pm$0.18} & \textbf{85.18$\pm$0.51} & \underline{94.53$\pm$0.53} & \underline{63.2$\pm$0.93} & \underline{92.7} \\
    \shortName{} (ours) & \textbf{100$\pm$0} & \underline{99.53$\pm$0.11} & \textbf{99.95$\pm$0.05} & \underline{99.25$\pm$0.19} & \textbf{99.88$\pm$0.07} & \underline{83.3$\pm$0.54} & \textbf{96.33$\pm$0.81} & \textbf{63.44$\pm$1.02} & \textbf{92.71} \\
    \midrule
    \multicolumn{10}{c}{\text{14B YaRN 128K Sequence Length, seed=[0, 42, 233, 666, 1234]}} \\
    \midrule
    FullAttn & \underline{98.4$\pm$0.51} & \textbf{55.2$\pm$1.64} & \textbf{88.25$\pm$0.69} & 64.1$\pm$1.16 & \underline{93.04$\pm$0.63} & \textbf{5.76$\pm$0.46} & \textbf{84.26$\pm$0.79} & \textbf{37.73$\pm$2.15} & \textbf{65.84} \\
    MoBA & 92.2$\pm$0.83 & 51.6$\pm$1.85 & 80.3$\pm$0.87 & 62.05$\pm$1.3 & 90.48$\pm$0.71 & 0.02$\pm$0.06 & 79.93$\pm$0.95 & 24.78$\pm$2.63 & 60.17 \\
    DSA & 96.4$\pm$0.62 & 52.7$\pm$1.73 & 83.6$\pm$0.81 & \textbf{64.3$\pm$1.16} & 92.48$\pm$0.62 & 4.8$\pm$0.52 & 77.93$\pm$1.16 & 32.19$\pm$2.03 & 63.05 \\
    \shortName{} (ours) & \textbf{98.6$\pm$0.53} & \underline{54.6$\pm$1.67} & \underline{88.2$\pm$0.69} & \underline{64.2$\pm$1.14} & \textbf{93.08$\pm$0.63} & \underline{5.51$\pm$0.45} & \underline{84.17$\pm$0.81} & \underline{37.62$\pm$2.12} & \underline{65.75} \\
    \midrule
    \multicolumn{10}{c}{\text{32B YaRN 128K Sequence Length, seed=[0, 42, 233, 666, 1234]}} \\
    \midrule
    FullAttn & \textbf{99.8$\pm$0.12} & \textbf{86.4$\pm$1.35} & \underline{95.7$\pm$0.56} & \textbf{88.0$\pm$0.79} & \textbf{98.76$\pm$0.22} & \underline{45.68$\pm$1.03} & \underline{93.27$\pm$1.25} & \underline{48.61$\pm$2.17} & \underline{82.03} \\
    \shortName{} (ours) & \underline{99.65$\pm$0.13} & \underline{86.3$\pm$1.35} & \textbf{95.9$\pm$0.66} & \underline{87.75$\pm$0.79} & \underline{98.69$\pm$0.23} & \textbf{45.8$\pm$1.04} & \textbf{96.33$\pm$0.81} & \textbf{50.03$\pm$1.97} & \textbf{82.56} \\
    \bottomrule
    \end{tabular}
  }
  \vspace{-1.0em}
  \label{table:retrieval_benchmark}
\end{table}

\paragraph{Long-context performance.}
Table~\ref{table:retrieval_benchmark} shows that \shortName{} closely tracks FullAttn at both context lengths.
At native 32K, their average scores differ by only 0.03 points at 14B and 0.01 points at 32B.
Both methods obtain lower aggregate scores under YaRN extrapolation to 128K, but remain close to each other: \shortName{} scores 65.75 versus 65.84 at 14B and 82.56 versus 82.03 at 32B.
At 14B and 128K, every reported per-task difference is within 0.60 points; at 32B, the higher aggregate score includes gains on RULER-FWE and RULER-QA.
These results support comparable long-context retrieval performance under the evaluated extrapolation setting.